\DeclareMathOperator{\rank}{rank}
\let\baraccent=\= %
\renewcommand{\=}[1]{\stackrel{#1}{=}} %
\providecommand{\cD}{\mathcal{D}}
\providecommand{\cS}{\mathcal{S}}
\renewcommand{\P}{\mathsf{P}}
\mathchardef\mhyphen="2D %
\providecommand{\sm}{\setminus}
\newcommand{\interior}[1]{%
  {\kern0pt#1}^{\mathrm{o}}%
}
\newtheorem{theorem}{Theorem}[section]
\newtheorem{lemma}[theorem]{Lemma}
\newtheorem{proposition}[theorem]{Proposition}
\newtheorem{hypothesis}[theorem]{Hypothesis}
\newtheorem{definition}[theorem]{Definition}
\newtheorem{remark}[theorem]{Remark}
\colorlet{linkequation}{blue}
\renewcommand{\P}{\mathbb{P}}
\newcommand{\E}{\mathbb{E}}
\newcommand{\R}{\mathbb{R}}
\newtheoremstyle{myremark} %
    {\topsep}                    %
    {\topsep}                    %
    {\rm}                        %
    {}                           %
    {\bf}                        %
    {.}                          %
    {.5em}                       %
    {}  %
\DeclareSymbolFont{rsfs}{U}{rsfs}{m}{n}
\DeclareSymbolFontAlphabet{\mathscrsfs}{rsfs}
\def\cS{{\mathcal S}}
\def\cD{{\mathcal D}}
\def\cS{{\mathcal S}}
\crefname{claim}{claim}{claims}
\crefname{fact}{fact}{facts}
\newtheorem{informaltheorem}[theorem]{Informal Theorem}
\numberwithin{equation}{section}
\DeclareMathOperator\cov{cov}
\newcommand{\smax}{\mathrm{softmax}}
\newenvironment{fminipage}%
  {\begin{Sbox}\begin{minipage}}%
  {\end{minipage}\end{Sbox}\fbox{\TheSbox}}
\newcommand{\fmlp}{f_{\mathrm{MLP}}}
\newcommand{\fmoe}{f_{\mathrm{MoE}}}
\newcommand{\topk}{\mathrm{top}k}
\title{Secret mixtures of experts inside your LLM}
\author{Enric Boix-Adsera\thanks{Wharton School of Statistics and Data Science at the University of Pennsylvania, \texttt{eboix@wharton.upenn.edu}}}
\begin{document}

\maketitle

\begin{abstract}

Despite being one of the earliest neural network layers, the Multilayer Perceptron (MLP) is arguably one of the least understood parts of the transformer architecture due to its dense computation and lack of easy visualization. This paper seeks to understand the MLP layers in dense LLM models by hypothesizing that these layers secretly approximately perform a sparse computation -- namely, that they can be well approximated by sparsely-activating Mixture of Experts (MoE) layers.

Our hypothesis is based on a novel theoretical connection between MoE models and Sparse Autoencoder (SAE) structure in activation space. We empirically validate the hypothesis on pretrained LLMs, and demonstrate that the activation distribution matters -- these results do not hold for Gaussian data, but rather rely crucially on structure in the distribution of neural network activations.

Our results shine light on a general principle at play in MLP layers inside LLMs, and give an explanation for the effectiveness of modern MoE-based transformers. Additionally, our experimental explorations suggest new directions for more efficient MoE architecture design based on low-rank routers.

\end{abstract}

\section{Introduction}

Despite being one of the earliest neural network modules, the Multilayer Perceptron (MLP) arguably remains one of the least understood parts of the transformer architecture. Unlike attention mechanisms, whose attention patterns can be visualized \cite{vaswani2017attention}, MLP layers resist straightforward inspection. How can we understand what MLP layers are actually doing inside a trained transformer?

A popular approach to study MLPs is to provide a mathematical analysis proceeding from simplifying assumptions. These assumptions could be stylized hyperparameters (e.g. \cite{jacot2018neural,li2020towards,jacot2021saddle,allen2019convergence}), toy data distributions (e.g., \cite{abbe2023sgd,boix2023can,nichani2024transformers}), or simplified architectures (e.g., \cite{saxe2013exact,arora2018convergence,bartlett2018gradient,radhakrishnan2024mechanism,zhu2025iteratively}). While such theoretical analyses can yield valuable insights, their simplifications raise a critical question: how much do they reflect models trained in practice? And how much can they directly tell us about the function of the MLP layer inside a real-world, trained transformer? More broadly:

\begin{center}
\textit{Is it possible to understand what MLP layers do while avoiding starting from simplifying assumptions, such as on (1) the hyperparameter regime, (2) the data distribution, or (3) the architecture?}
\end{center}

\paragraph{Our contributions} In this paper, rather than starting from a toy simplification and analyzing it, we seek to understand MLP layers by starting from a scientific hypothesis on the type of function that they secretly compute inside of trained transformers. Then, we experimentally check the extent to which this hypothesis is valid.

\begin{figure}[h]
\centering
\includegraphics[width=0.8\linewidth,trim={0 7cm 0 14cm},clip]{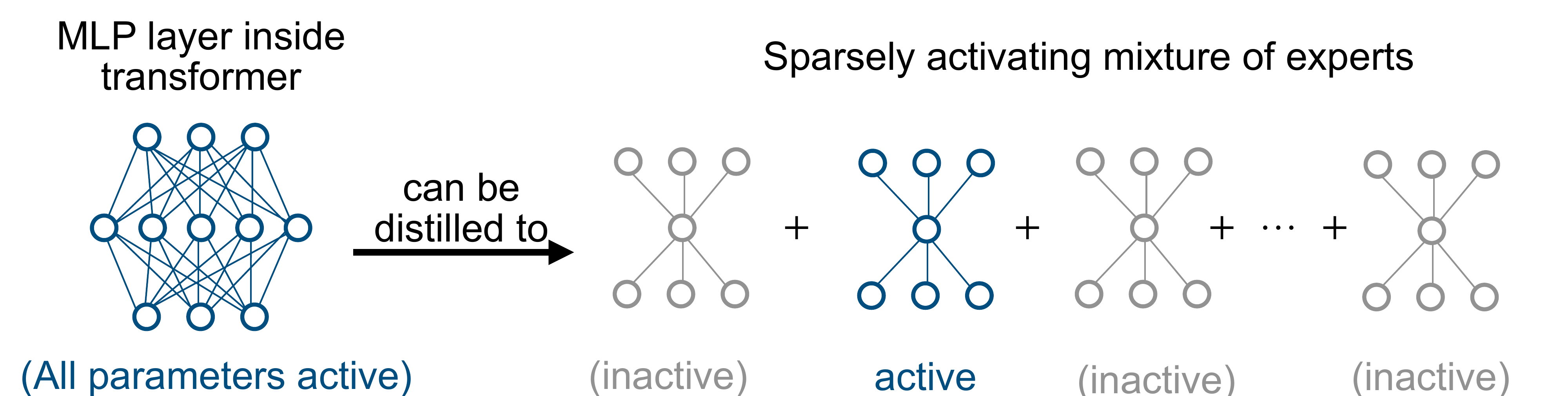}
\caption{In this paper we hypothesize, and then experimentally validate, that an MLP layer in the middle of a pretrained transformer can be effectively described by a sparsely-activating mixture-of-experts layer.}\label{fig:hypothesis}
\end{figure}

\begin{enumerate}
\item \textbf{Hypothesis: MLPs in trained transformers secretly implement sparse mixtures of experts}.

We hypothesize that MLP layers in trained transformers have a certain hidden structure: they can be well approximated by sparsely-activating Mixture of Experts layers (MoE) \cite{jacobs1991adaptive,eigen2013learning}, with a much smaller number of active parameters than the original MLP; see Figure~\ref{fig:hypothesis}.

This hypothesis is spurred by a novel \textbf{mathematical insight showing a connection between sparse autoencoder structure in activation space and secret Mixture of Experts structure}. Namely, it has been observed that activations in language models are typically sparse in some dictionary \cite{bricken2023monosemanticity}. Starting from this observation, we prove theorems that suggest that the MLPs in trained networks should also be well approximated by sparsely-activating MoE layers.

We also prove that the dictionary-sparsity in the input distribution and the model architecture is critical, since in contrast for Gaussian inputs (which do not have dictionary-sparse input structure), we prove that MLPs should generally not be approximable by sparsely-activating MoEs.

\item \textbf{Empirical validation of hypothesis: MLP layers in trained models have secret MoE structure}. Next, we empirically validate the hypothesis. By distilling from a dense MLP in a pretrained model to a sparsely-activating MoE, we show that layers in pretrained LLMs can be approximated well by sparse MoEs. We compare this to the ``control'' condition, where the input distribution is Gaussian distribution (with matched covariance), in which case we demonstrate that MoEs are not a good approximation. 

As a bonus, our distillation experiments yield a promising direction for efficient MoE architecture design -- we find that using a low-rank router improves overall distillation performance while reducing computational cost.

\end{enumerate}

\begin{figure}
\begin{tabular}{cc}
\includegraphics[width=0.45\textwidth]{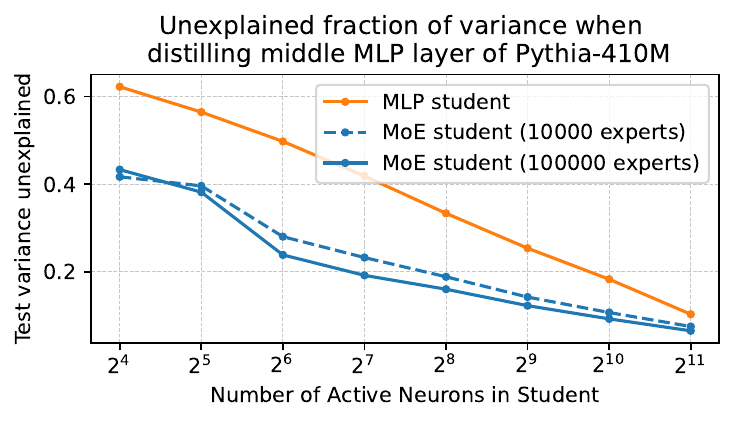} & \includegraphics[width=0.45\textwidth]{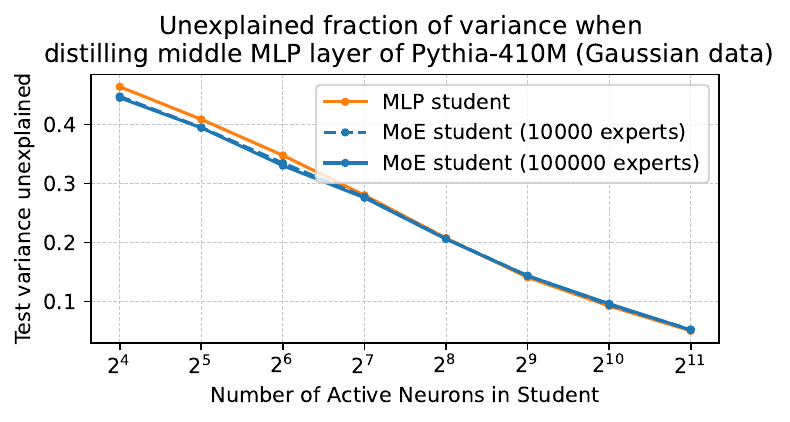}
\end{tabular}
\caption{We distill the middle MLP layer of Pythia-410M to either a smaller MLP student model, or an MoE student model with fewer active parameters. On the left, we see that under the input distribution induced by the previous layers, MoE students can achieve the same distillation performance with fewer active parameters than MLPs. On the right, under a Gaussian input distribution with the same mean and covariance, MoE students yield no significant gain, showing that the data distribution is crucial for the secret MoE structure. See Section~\ref{sec:experimental-validation} and Appendix~\ref{app:additional-experiments} for details and further experiments.}\label{fig:teaser-experiments}
\end{figure}

The organization of the rest of this paper is as follows. Section~\ref{sec:preliminaries} presents preliminaries, including the definitions of MLP and MoE architectures. Section~\ref{sec:hypothesis} presents the mathematical analysis that spurs the secret mixture of experts hypothesis. Finally, Section~\ref{sec:experimental-validation} presents experimental validation of the hypothesis on several pretrained language models, by distilling MLP layers to more structured sparse MLP layers. We discuss the broader implications of our approach as a framework for conducting research in architecture design and studying pretrained models in Section~\ref{sec:discussion}. We discuss further related work in Section~\ref{sec:further-related}.

\section{Preliminaries}\label{sec:preliminaries}

\subsection{MLP and MoE architectures} 
We consider inputs $x \in \mathbb{R}^d$, drawn from some distribution $D$.  The MLP module (also called feedforward module) is the earliest designed neural network \citep{rosenblatt1958perceptron,rosenblatt1962principles}, and has the form:
\begin{align}
\fmlp(x;A,B) = A\sigma(Bx)\mbox{ where } A \in \R^{d_{out} \times d_{mlp}}, B \in \mathbb{R}^{d_{mlp} \times d}\,,
\end{align}
where $\sigma : \R \to \R$ is an activation function applied elementwise, $d_{mlp}$ is the intermediate dimension, and $d_{out}$ is the output dimension. Standard transformer architectures have alternating layers of MLPs and attention modules \cite{vaswani2017attention}.

The MoE module \citep{jacobs1991adaptive,eigen2013learning,shazeer2017outrageously} with $m$ experts has a gating function $g : \R^d \to \R^m$ and parameters $A_1,\ldots,A_m \in \R^{d_{out} \times d_{exp}}$, and $B_1,\ldots,B_m \in \R^{d_{exp} \times d}$, and has the form:
\begin{align}\label{eq:moe-definition}
\fmoe(x;\{A_i,B_i\}_{i \in [m]}, g) = \sum_{i=1}^m g(x)_i \fmlp(x;A_i,B_i)\,.
\end{align}

When the gating function has sparsity $\|g(x)\|_0 \leq k$ for all inputs, the layer is a $(m,k)$-MoE, meaning that only $k$ out of the $m$ experts are active. Since only the active experts have to be evaluated, this results in significant computational efficiency in training and in inference.

The most popular gating function is ``linear routing'' with top-$k$ activation, which is what we consider in this paper. For completeness, we define this gating function below.

\begin{definition} Let $\topk : \R^m \to (\{-\infty\} \cup\R)^m$ be the map that preserves the top $k$ entries, sending all other entries to $-\infty$. Let $\beta \geq 0$ be the inverse temperature parameter. Define $\smax : \R^m \to \R^m$ as
\begin{align*}
\smax(z_1,\ldots,z_m) = [\frac{e^{z_1}}{\sum_{i=1}^m e^{z_i}}, \ldots, \frac{e^{z_m}}{\sum_{i=1}^m e^{z_i}}]\,.
\end{align*}
Then, linear routing has parameters $\beta \geq 0$ and a matrix $R \in \R^{m \times d}$, and is given by\footnote{Out of convenience we take the convention that $0 \times -\infty = -\infty$ so that the gating is continuous in $\beta$.}
\begin{align*}
g(x;\beta, R) := \smax(\beta \cdot \topk(Rx))\,.
\end{align*}
\end{definition}

\begin{definition} We say that $g$ is a ``hard'' gating function if it places uniform weight on the active experts: namely $g(x) \in \{0,1/\|g(x)\|_0\}^m$ for any input $x$. Notice that, when $\beta = 0$, then linear routing is a hard gating function.
\end{definition}

\section{Hypothesized structure: secret mixtures of experts in your model}\label{sec:hypothesis}

The main hypothesis of this paper is:
\begin{hypothesis}\label{hyp:main-hypothesis}
In a pretrained transformer model, the dense MLP layers can be well-approximated by sparsely-activating MoE layers.
\end{hypothesis}

We emphasize that this hypothesis is \textit{not} that a transformer architecture with MoE layers instead of MLP layers will have good performance. Indeed, it is known that transformers with MoE layers perform well \cite{shazeer2017outrageously}, and these are becoming an increasingly common architectural choice in frontier language models \citep{liu2024deepseek,meta2025llama4,jiang2024mixtral,mosaic2024introducing,qwen2024qwen,snowflake2024snowflake,agarwal2025gpt}.

Instead, our hypothesis is that, in dense transformer models \textit{with MLP layers}, the MLP layers can be well-approximated by sparsely-activating MoE layers. If the hypothesis is true, then it helps explain why replacing these layers with MoE layers is a good choice to preserve performance with a smaller number of active parameters.

We formally define below what is meant by approximating a model under an input distribution $D$.
\begin{definition}
A function $f^*(x)$ is $\epsilon$-\textit{approximable} by an $(m,k)$-MoE model with gating function $g$ and expert dimension $d_{exp}$ if there are $A_1,\ldots,A_m \in \R^{d_{out} \times d_{exp}}$ and $B_1,\ldots,B_m \in \R^{d_{exp} \times d}$ such that 
\begin{align*}
\E_{x \sim D}[\|f^*(x) - \fmoe(x;\{A_i,B_i\}_{i \in [m]}, g)\|^2] \leq \epsilon\,.
\end{align*}
\end{definition}

In the context of our hypothesis, the input distribution $D$ is the distribution induced by pushing forward the input distribution to the network (such as a text distribution) through the layers preceding the MLP layer. In other words, if we are considering approximating the MLP at layer $\ell$, then $D$ is the distribution of internal activations that are inputted to the MLP at layer $\ell$.

\subsection{Under Gaussian input distribution, dense MLPs cannot be approximated by sparsely-activating MoEs}

Hypothesis~\ref{hyp:main-hypothesis} is a significant claim on the structure of trained models. Indeed, we prove below that this hypothesis is implausible if the input data to the MLP is isotropic Gaussian (which is a common assumption in deep learning theory). We prove that the identity function (one of the simplest functions expressible by a dense MLP) cannot be approximated by sparse MoEs if the data is Gaussian.

\begin{theorem}[Inapproximability of identity by sparse MoEs under Gaussian data distribution]\label{thm:inapproximability-moe-gaussian}
There are universal constants $c,c' > 0$ such that the following is true. Under isotropic Gaussian input distribution $D = N(0,I_d/d)$, \textbf{there is no} $(m,k)$-MoE with hard gating function and a number of active neurons $kd_{exp} < d/2$ and number of expert configurations $m^k \leq \exp(cd)$, that can $c'$-approximate the identity function $f^*(x) = x$.
\end{theorem}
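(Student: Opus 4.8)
The plan is to exploit the fact that a hard $(m,k)$-MoE is, by construction, a piecewise function with few pieces, each of which ignores most of its input. Concretely, a hard gating function partitions $\R^d$ into $N$ regions $R_1,\dots,R_N$ according to which set $S_j$ of (at most $k$) experts is active, and by hypothesis $N$ is at most the number of expert configurations $m^k \le e^{cd}$. On $R_j$ the MoE output equals $\frac{1}{|S_j|}\sum_{i\in S_j}A_i\sigma(B_ix)$, which depends on $x$ only through the images $(B_ix)_{i\in S_j}$, hence only through the orthogonal projection $P_j$ onto $\spn\{\text{rows of }B_i:i\in S_j\}$, of rank at most $kd_{exp}<d/2$. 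So it suffices to prove the geometric statement: if $F:\R^d\to\R^d$ is, on each of $N\le e^{cd}$ regions $R_j$, a function of $P_jx$ only, with $\rank P_j<d/2$, then $\E_{x\sim N(0,I_d/d)}\|x-F(x)\|^2\ge c'$.

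To prove this I would first pass to the component of the residual orthogonal to each projection. On $R_j$, write $x=u+v$ with $u=P_jx$ and $v=(I-P_j)x$; then $F(x)=\psi_j(u)$ for some $\psi_j$, and projecting onto the $(d-\rank P_j)$-dimensional space $V_j^\perp:=\mathrm{Im}(I-P_j)$ gives $\|x-F(x)\|^2\ge\|v-\phi_j(u)\|^2$ with $\phi_j(u):=(I-P_j)\psi_j(u)\in V_j^\perp$. The key structural point is that for $x\sim N(0,I_d/d)$ and $P_j$ an orthogonal projection, $v$ is an isotropic Gaussian $N(0,I_{V_j^\perp}/d)$ \emph{independent} of $u$, and after conditioning on $u$ the region becomes a slice $C_j(u)=\{v:u+v\in R_j\}\subseteq V_j^\perp$. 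Since $\phi_j(u)$ is a constant vector given $u$, minimizing $\|v-\phi_j(u)\|^2$ over that constant (the minimizer being the conditional mean) yields
\[
\E_x\|x-F(x)\|^2\ \ge\ \sum_{j=1}^N\int_{V_j}\gamma\big(C_j(u)\big)\cdot\Var\big(v\mid v\in C_j(u)\big)\,d\gamma_{V_j}(u)\ =\ \E_{(j,u)\sim W}\big[\Var(v\mid v\in C_j(u))\big],
\]
where $\gamma$ denotes the relevant isotropic Gaussian measure, $\Var(\cdot)$ the trace of the conditional covariance, and $W$ is the probability measure putting weight $\gamma(C_j(u))\,d\gamma_{V_j}(u)$ on $(j,u)$, of total mass $\sum_j\gamma_d(R_j)=1$.

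Now suppose for contradiction that $\E_x\|x-F(x)\|^2\le c'$. Then $\Var(v\mid v\in C_j(u))\le\sqrt{c'}$ for a set $\mathcal G$ of pairs of $W$-mass at least $1-\sqrt{c'}$, by Markov. For each good pair, a second Markov bound shows at least $3/4$ of the conditional mass of $v\mid v\in C_j(u)$ lies in a ball of radius $\rho:=2(c')^{1/4}$ about the conditional mean, so $\gamma(C_j(u))\le\tfrac43\,\gamma_{V_j^\perp}(B(\text{center},\rho))\le\tfrac43\,\gamma_{V_j^\perp}(B(0,\rho))$, the last step by Anderson's theorem (the Gaussian measure of a Euclidean ball is largest when centered at the origin). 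Since $\dim V_j^\perp\ge d-\rank P_j>d/2$ and $\rho$ is a small absolute constant, a $\chi^2$ lower-tail bound gives $\gamma_{V_j^\perp}(B(0,\rho))\le e^{-\Omega(d)}$. Summing over the at most $e^{cd}$ regions, $W(\mathcal G)\le\tfrac43 e^{cd}\cdot e^{-\Omega(d)}$, which is strictly below $1-\sqrt{c'}$ once $c'$ is a small enough absolute constant and then $c$ is chosen small enough — contradiction. (For the finitely many small $d$ where $e^{cd}<2$ forces a single expert configuration, $F$ factors through one rank-$<d/2$ projection, so the error is $\ge\E\|(I-P_1)x\|^2>1/2\ge c'$ outright; choosing $c$ small makes the two cases cover every $d$.)

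The step I expect to be the main obstacle is controlling the interaction between the region structure and the input distribution: a priori the regions $R_j$ could be placed adversarially relative to the low-rank projections $P_j$ (say, concentrated exactly where $\|P_jx\|$ is atypically large), so one cannot argue region by region with unconditional variances. The device that resolves this is conditioning on $u=P_jx$, which makes $v$ an honest independent isotropic Gaussian, turns each region into a slice $C_j(u)$, and reduces the whole claim to: a collection of only $e^{cd}$ sets, each of exponentially small Gaussian measure (forced by $\rank P_j<d/2$ via $\chi^2$-concentration, together with Anderson's theorem), cannot carry the mass of $N(0,I_d/d)$. A secondary point to verify is that the argument uses only measurability of the regions — no convexity — so it in fact applies to arbitrary hard gating functions, not merely linear top-$k$ routers.
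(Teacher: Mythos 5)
Your proposal is correct, and the first half of it — partitioning $\R^d$ into at most $(m+1)^k$ regions by active expert set and observing that on each region the MoE factors through a projection of rank at most $kd_{exp}<d/2$ — is exactly the paper's reduction. Where you genuinely diverge is in how the low-rank obstruction is quantified. The paper invokes, as a black box, a technical lemma from prior work (Lemma C.2 of the cited granularity paper) giving a per-region lower bound $\E_{x\sim\mu|_{U_S}}\|Ax-h(\Pi x)\|^2\gtrsim\frac{1}{d}\sum_{i\geq C(1+\log(1/\mu(U_S)))}\sigma_i^2(A\Pi^\perp)$, discards regions of mass below $1/(3(m+1)^k)$, and sums. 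You instead re-derive the needed special case from scratch: condition on $u=P_jx$ so that the orthogonal component $v$ is an independent isotropic Gaussian on a subspace of dimension $>d/2$, lower-bound the per-slice error by the conditional variance of $v$ restricted to the slice $C_j(u)$, and then use Anderson's inequality together with $\chi^2$ lower-tail concentration to show that small conditional variance forces every slice — hence every region — to have Gaussian measure $e^{-\Omega(d)}$, which at most $e^{cd}$ regions cannot reconcile with total mass $1$. Your route is self-contained and arguably conceptually cleaner (``few sets of exponentially small measure cannot cover the Gaussian''), and as you note it needs only measurability of the regions; the paper's route is less work on the page and inherits a more quantitative bound (full singular-value spectrum, arbitrary linear targets) from the cited lemma. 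Two minor points to tidy: the number of expert configurations is $\sum_{j\le k}\binom{m}{j}\le(m+1)^k$ rather than $m^k$ if the gate may activate fewer than $k$ experts (this only perturbs constants, and the paper itself uses $(m+1)^k$); and you should record the constraint $c'<1/256$ or similar so that the ball radius $\rho=2(c')^{1/4}$ satisfies $\rho^2 d$ bounded by a small constant multiple of $\dim V_j^\perp$ before applying the $\chi^2$ tail bound.
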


\begin{proof}[Proof sketch]
See Appendix~\ref{app:gaussian-distribution-identity-inapproximable} for the full proof. We outline the main ideas below.
The mixture-of-expert's gating function splits the input space $\R^d$ into a collection of measurable regions $\{U_S \subseteq \R^d \}_{S \subseteq [m]}$, each of which has a different subset $S$ of experts active. Formally, $U_S = \{x \in \R^d : \mbox{support}(g(x)) = S\}$ where $g$ is the gating function.

We can restrict to considering regions $U_S$ that have lower-bounded probability
\begin{align}\P[x \in U_S] \geq 1/(3(m+1)^k),\label{ineq:region-probability-lower-bound}
\end{align}
since by a union bound argument the regions with smaller probability provide negligible contributions and can be ignored in our analysis.

Let us now argue that the mixture of experts model $\fmoe$ cannot approximate the identity function $f^*(x) = x$ on any region $U_S$ satisfying the probability lower bound \eqref{ineq:region-probability-lower-bound}. On inputs restricted to region $U_S$, the mixture of experts $\fmoe$ is a sum of $k$ MLPs of width $d_{exp}$. In other words, on this region the mixture of experts is given by an MLP of width $kd_{exp}$. Since an MLP depends on a subspace of the inputs of dimensionality at most equal to its width, there is a linear projection $\Pi_S \in \R^{kd_{exp} \times d}$ and a function $f_S : \R^{kd_{exp}} \to \R^d$ such that
\begin{align*}
\fmoe(x) = f_S(\Pi_S x) \mbox{ for all } x \in U_S\,.
\end{align*}
Intuitively, since $kd_{exp} < d/2$, the projection $\Pi_S$ to a lower-dimensional space ``loses information'' about $x$. Therefore the mixture-of-experts should not be able to compute the identity function $f^*(x)$ on this region. The only way in which the identity function can be computed is if  the region is degenerate -- i.e.,  $U_S$ mostly lies in a low-dimensional subspace of the input space. However, this would imply that the region has small probability mass, contradicting the probability mass lower bound condition \eqref{ineq:region-probability-lower-bound}.

We make this intuition precise by leveraging a technical lemma of \cite{boix2025power}, which was developed for a different purpose (studying the expressive power of granularity in mixture of experts models). The full proof is in Appendix~\ref{app:gaussian-distribution-identity-inapproximable}.

\end{proof}

\subsection{Nevertheless, dictionary-sparse structure in inputs implies approximability of MLPs by MoEs}

The result of the previous section indicates that for our Hypothesis~\ref{hyp:main-hypothesis} to be true, then there must be extra structure in the input distribution, which allows the MLP's computations to be well approximated by a mixture of experts. Indeed, our the hypothesis is motivated by a better understanding of the structure of the input distribution, beyond the crude assumption of Gaussianity.

We proceed from the observation of \cite{bricken2023monosemanticity} that neural network activations are approximately sparse in some dictionary of vectors. Let us formalize this observation.

\begin{definition}[Dictionary-sparse structure]
Distribution $D$ over vectors is $(m,k)$-dictionary-sparse if there is a dictionary of vectors $v_1,\ldots,v_m$ such that any vector $x$ in the support of $D$ lies in $x \in \mathrm{span}\{v_{i_1},\ldots,v_{i_k}\}$ for some $i_1,\ldots,i_k \in [m]$.\footnote{In practice, the activation distribution is not perfectly sparse in the dictionary, but only approximately. Our results below can be readily adapted with an extra additive error term to account for this approximation, but we omit this to keep the notation and discussion simple.}
\end{definition}

For the purposes of our analysis, we additionally posit the property the dictionary's vectors are approximately orthogonal to each other. This is a natural property to expect in high dimensions, since the normalized inner product of two random vectors in dimension $d$ is $O(1/\sqrt{d})$. Additionally, approximate orthogonality has been argued to be key to how networks represent concepts in superposition \cite{elhage2022toy}:
\begin{definition}[Approximately orthogonal dictionary]
A dictionary $v_1,\ldots,v_m \in \R^d$ is $(\gamma,k)$-orthogonal if 
for any $x \in \R^d$ and $i_1,\ldots,i_k \in [m]$ we have that 
$\|\sum_{j=1}^k \hat{v}_{i_j} \hat{v}_{i_j}^{\top} x - z\| \leq \gamma \|z\|$, where $z$ is the projection of $x$ to the span of $v_{i_1},\ldots,v_{i_k}$.
\end{definition}

Now we show that Hypothesis~\ref{hyp:main-hypothesis} is reasonable when the input distribution satisfies the above properties. As long as the input distribution is dictionary-sparse with an approximately-orthogonal dictionary, we show that sparse MoEs can represent any linear function. Thus, they overcome the obstacle from Gaussian input distribution, where they could not even represent the special case of an identity function $f^*(x) = x$.

\begin{theorem}[Linear functions are approximable by sparse MoEs under sparse-dictionary data]\label{thm:approx-linear-with-sparse-dictionary-data}
Suppose that the data distribution $D$ is supported in the unit ball, and is an $(m,k)$-dictionary-sparse distribution for a $(\gamma,k)$-approximately-orthogonal dictionary.

Then for any linear function $f^*(x) = Ax$ there is a hard-gated $(m,k)$-MoE with $d_{expert} = 1$ that $\gamma \|A\|_{\mathrm{op}}$-approximates $f^*$.
\end{theorem}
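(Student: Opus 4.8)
The plan is to give a direct construction: route each input to the $k$ dictionary directions whose span contains it, and have those $k$ rank-one experts reconstruct the corresponding pieces of $A$.

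For the gating, observe that by $(m,k)$-dictionary-sparsity every $x$ in the support of $D$ lies in $\mathrm{span}\{v_i : i \in S\}$ for some $k$-element set $S \subseteq [m]$ (pad $S$ with arbitrary extra indices if $x$ lies in the span of fewer than $k$ dictionary vectors). I would fix one such choice $S(x)$ for each $x$ — e.g.\ the lexicographically first valid $k$-subset, which makes $x \mapsto S(x)$ measurable since each preimage is a finite Boolean combination of linear subspaces — and define the hard gating function $g(x) = \tfrac{1}{k}\mathbf{1}_{S(x)}$, so $\|g(x)\|_0 = k$ and $g$ puts uniform weight $1/k$ on its active set. For the experts I would take expert dimension $d_{exp} = 1$ with the identity activation, and for each $i \in [m]$ set $B_i = \hat v_i^\top \in \R^{1 \times d}$ and $A_i = k\, A \hat v_i \in \R^{d_{out} \times 1}$, so that $\fmlp(x; A_i, B_i) = A_i (B_i x) = k\, A\, \hat v_i \hat v_i^\top x$, and therefore
\begin{align*}
\fmoe(x) = \sum_{i \in S(x)} \tfrac{1}{k} \cdot k\, A \hat v_i \hat v_i^\top x = A \Big( \sum_{i \in S(x)} \hat v_i \hat v_i^\top \Big) x .
\end{align*}

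The error bound is then immediate from approximate orthogonality. Since $x \in \mathrm{span}\{v_i : i \in S(x)\}$, its projection onto that span equals $x$, so the $(\gamma,k)$-orthogonality hypothesis gives $\big\| \sum_{i \in S(x)} \hat v_i \hat v_i^\top x - x \big\| \le \gamma \|x\|$, and hence
\begin{align*}
\|\fmoe(x) - Ax\| = \Big\| A \Big( \sum_{i \in S(x)} \hat v_i \hat v_i^\top x - x \Big) \Big\| \le \gamma\, \|A\|_{\mathrm{op}}\, \|x\| \le \gamma\, \|A\|_{\mathrm{op}},
\end{align*}
where the last step uses that $D$ is supported in the unit ball. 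Squaring and taking expectations gives $\E_{x \sim D} \|\fmoe(x) - Ax\|^2 \le \gamma^2 \|A\|_{\mathrm{op}}^2$, which is the asserted $\gamma\|A\|_{\mathrm{op}}$-approximation (as a root-mean-square bound, or as a squared-error bound in the regime $\gamma\|A\|_{\mathrm{op}} \le 1$).

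I do not anticipate a genuine obstacle here, as the construction above is essentially forced by the hypotheses; the two points that deserve a little care are (i) producing a valid and measurable hard gating function, which is exactly what dictionary-sparsity buys us, and (ii) the activation function, where the identity activation lets one rank-one expert reproduce $A \hat v_i \hat v_i^\top$ exactly, and a fixed nonlinearity such as $\mathrm{ReLU}$ can be accommodated up to constant factors (either because the pre-activations $\hat v_i^\top x$ may be assumed nonnegative, as for ReLU-based sparse autoencoders, or by using a pair of experts per dictionary direction to synthesize a linear map). It would also be natural to check whether this gating can be implemented by the paper's linear router with $\beta = 0$, i.e.\ by $R$ with rows $\hat v_i^\top$ and top-$k$ selection; this works whenever the dictionary coefficients of each $x$ dominate the $O(\gamma)$ leakage onto inactive directions, but the general hard-gating formulation sidesteps the need for such an extra assumption.
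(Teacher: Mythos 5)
Your construction is exactly the paper's: single-neuron experts computing $k A \hat v_i \hat v_i^\top x$, a hard gate placing weight $1/k$ on the $k$ dictionary directions spanning $x$, and the error bound via the $(\gamma,k)$-approximate-orthogonality property combined with $\|x\|\le 1$. The proposal is correct and takes essentially the same approach as the paper, with some extra (welcome) care about measurability of the gate and about the squared-error form of the approximation definition.
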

\begin{proof}
Let the MoE have $m$ experts, each corresponding to one of the elements of the dictionary. Let the $i$th expert compute $f_i(x) = k A \hat{v}_i \hat{v}_i^{\top} x$, which is implementable with a single neuron and linear activation function.

Next, let the hard gating function be such that, on input $x$, the gating function activates $k$ experts $i_1,\ldots,i_k$ such that $x \in \mathrm{span}\{v_{i_1},\ldots,v_{i_k}\}$. 

Notice $\fmoe(x) = \sum_{j=1}^k A \hat{v}_{i_j} \hat{v}_{i_j}^{\top}$. Since $x \in \mathrm{span}\{v_{i_1},\ldots,v_{i_k}\}$, by the $(\gamma,k)$-approximate-orthogonality property it follows that $\|\fmoe(x) - Ax\| \leq \|A\|_{\mathrm{op}} \|\fmoe(x) - x\| \leq \gamma \|A\|_{\mathrm{op}} \|x\| \leq \gamma \|A\|_{\mathrm{op}}$.
\end{proof}

A few remarks to help interpret this theorem are in order.

\begin{remark}[Making sense of the number of active parameters]
The number of active parameters of the MoE should be contrasted to the number of active parameters that would be needed to perform the same approximation with an MLP. For simplicity, let us consider the identity function $f^*(x) = x$ as our linear function, and let us consider a dictionary $e_1,\ldots,e_d$ of the standard basis vectors and a distribution $D$ which is uniform on $\{e_1,\ldots,e_d\}$. In this setting, the above theorem guarantees that there is a mixture of $d$ single-neuron experts, of which exactly $1$ is active on any input, which computes $f^*$ perfectly. On the other hand, in order to obtain this with an MLP, the output of the MLP has to be able to span the full $d$-dimensional space, which means that it must have at least $d$ neurons. \textbf{Therefore, the theorem shows a factor of $d$ decrease in the number of active expert parameters with a sparse MoE over a dense MLP.}
\end{remark}

\begin{remark}[On the approximation guarantee]
In $d$ dimensions, we expect the approximate orthogonality of our dictionary to be on the order of $\gamma = O(1 / \sqrt{d})$, since this is the approximate magnitude of the inner product of two random vectors on the sphere. So, as the dimension increases, the error bound of the theorem should tend to 0.
\end{remark}

\begin{remark}[On the implementation of the gating scheme]
Although we are not describing how to compute the gating function in the above theorem, in practice \cite{bricken2023monosemanticity,bussmann2024batchtopk,gao2024scaling} have shown that  the active dictionary indices $i_1,\ldots,i_k$ in a sparse autoencoder can be found with a linear projection followed by a $\topk$ operation. In these cases, the MoE gating scheme can be implemented with linear routing and $\beta = 0$.
\end{remark}

In Appendix~\ref{app:moe-nonlinear-extension} we extend this Theorem~\ref{thm:approx-linear-with-sparse-dictionary-data} to nonlinear functions. We state our result informally below.
\begin{informaltheorem}[Nonlinear functions are approximable by sparse MoEs under dictionary data]\label{thm:approx-nonlinear-with-sparse-dictionary-data}
Suppose that the distribution $D$ is supported in the unit ball, and is an $(m,k)$-dictionary-sparse distribution for a $(\gamma,k)$-approximately-orthogonal dictionary.

If $f^* : \R^d \to \R^d$ is a homogeneous polynomial given by a tensor with ``rank-$r$ interactions between features of the dictionary'', then there is a hard-gated $(m,k)$-MoE with $d_{expert} \leq O_p(r)$ that $\gamma \|A\|_{\mathrm{op}}$-approximates $f^*$.
\end{informaltheorem}
See Appendix~\ref{app:moe-nonlinear-extension} for the relevant definitions and a formal statement and proof of this extension.

\section{Experiment: distillation uncovers secret MoE structure}\label{sec:experimental-validation}

\begin{figure}
\centering
\includegraphics[width=0.6\linewidth,trim={1cm 17cm 25cm 0cm},clip]{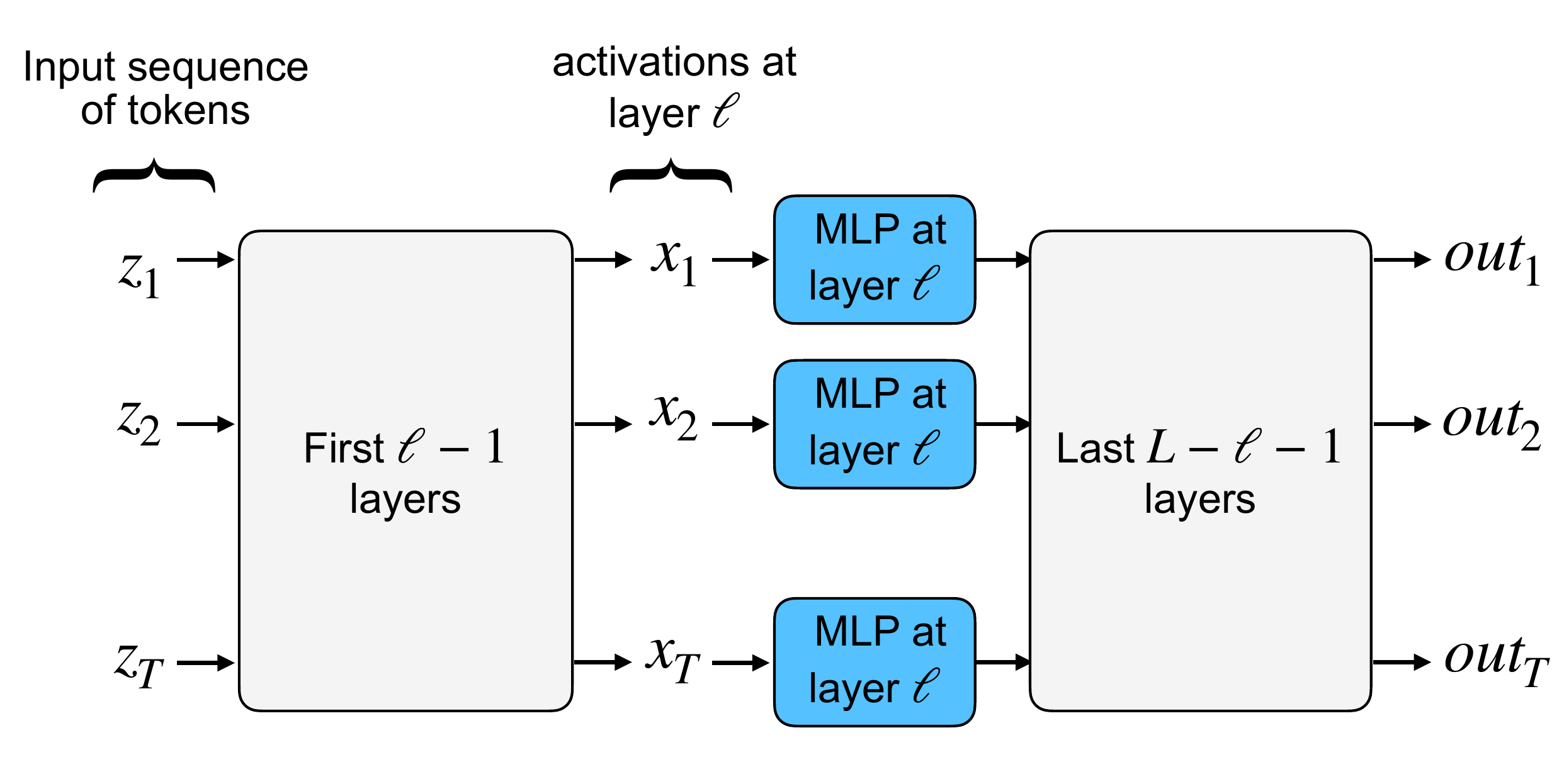}
\caption{The dataset of internal activations is created by pushing forward datasets of text through all layers preceding the MLP that we seek to distill.}\label{fig:pushforwardactivations}
\end{figure}

We empirically test the Secret MoE Hypothesis (Hypothesis~\ref{hyp:main-hypothesis}) on pretrained transformer LLMs. Our experiments report the loss of distilling a dense MLP at some layer $\ell$ inside of a pretrained language model to a sparsely-activating mixture of experts.  We distill the middle MLP layers of three dense transformer architectures of increasing scale: Pythia-70m (layer $\ell = 3$), Gemma-270m (layer $\ell = 9$), and Pythia-410m (layer $\ell = 12$).

\paragraph{Datasets over which we distill} The datasets that we distill over are generated as follows.
We start with a dataset $\cD^{text}$ of tokenized texts $\{(z_1^{(i)},\ldots,z_{T_i}^{(i)})\}_{i \in [n]}$ of varying lengths that are inputted to the LLM. We push each text $(z_1^{(i)},\ldots,z_{T_i}^{(i)})$ forward through the first $\ell-1$ layers of the network to get a sequence of activations $(x_1^{(i)},\ldots,x_{T_i}^{(i)})$ that are the inputs to the MLP at layer $\ell$ (see Figure~\ref{fig:pushforwardactivations}). We form the dataset $\cD^{act}$ by concatenating all activations generated in this way $$\cD^{act} = \begin{bmatrix} x_1^{(1)},\ldots,x_{T_1}^{(1)},\ldots, x_1^{(i)}, \ldots,x_{T_i}^{(i)}, \ldots, x_1^{(n)},\ldots, x_{T_n}^{(n)} \end{bmatrix} \in \R^{d \times (T_1 + \dots T_n)}.$$ We run this procedure to generate a training dataset $\cD^{act,train}$ and a testing dataset $\cD^{act,test}$ from corresponding train and test splits of the text datasets. We use Wikitext-103 \cite{merity2016pointer} in our experiments, filtering out texts with fewer than 20 tokens to select higher-quality text data. This yields a total of around 4M training samples and 200K test samples, which is a computationally manageable quantity for distillation given our computational budget.

We additionally generate datasets $\cD^{gauss,train}$ and $\cD^{gauss,test}$ of Gaussian data with the same mean and covariance as $\cD^{act,train}$ and the same corresponding numbers of train samples and test samples.

\begin{figure}[h]

\centering

\begin{tabular}{c|c}
\textbf{Activation Data $\cD^{act}$} & \begin{tabular}{c}\textbf{Gaussian data $\cD^{gauss}$} \\ {\textbf{with 
matching mean and covariance}}\end{tabular} \\
\hline
\includegraphics[width=0.45\textwidth]{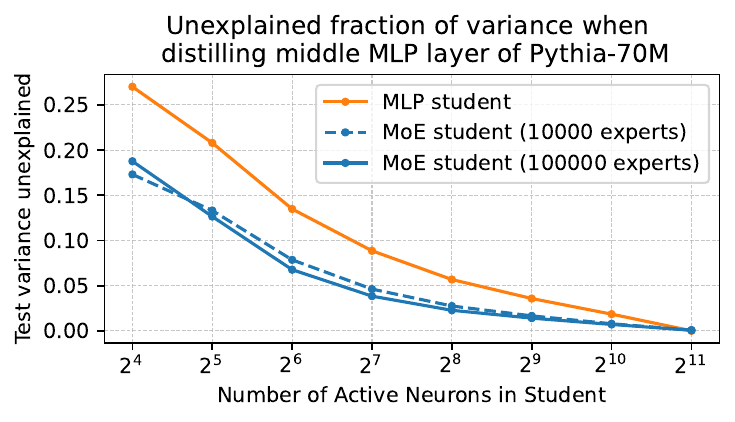} & \includegraphics[width=0.45\textwidth]{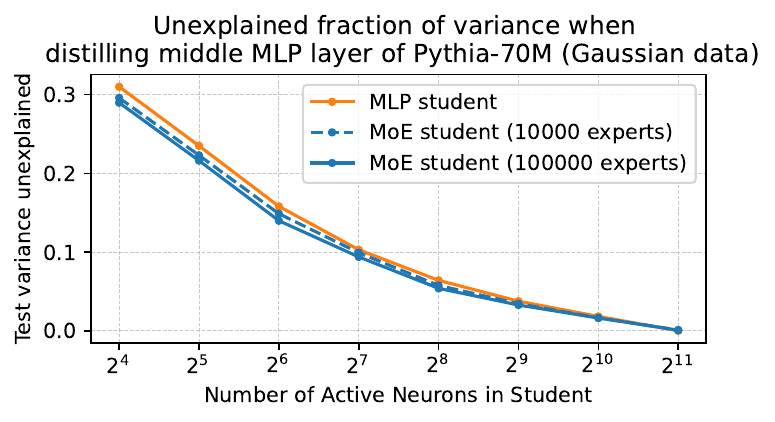} \\
\includegraphics[width=0.45\textwidth]{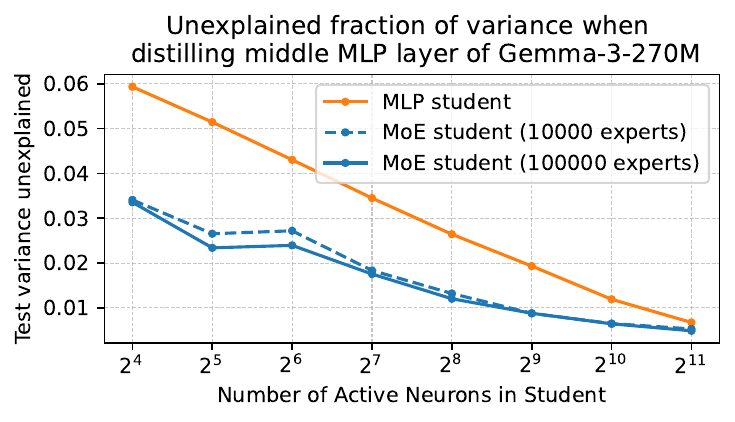} & \includegraphics[width=0.45\textwidth]{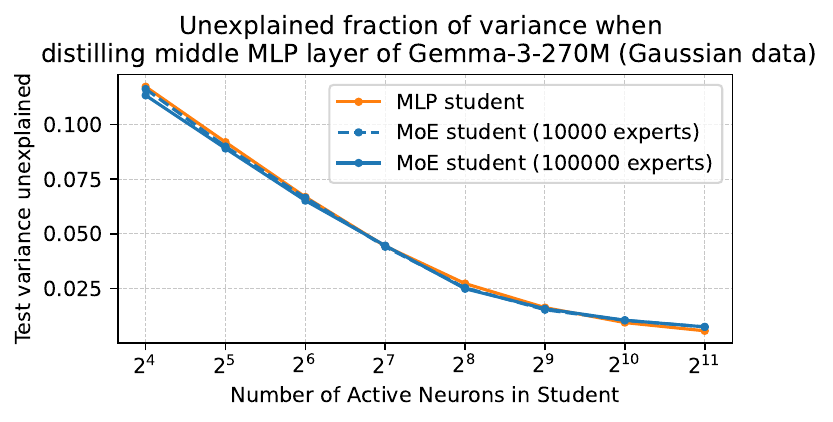}\end{tabular}
\caption{The unexplained fraction of the variance in the outputs from distilling the middle MLP layer of Pythia-70M (first row), Gemma-270M (second row). Results for Pythia-410M are in Figure~\ref{fig:teaser-experiments}. In the left column, we observe that over the activation dataset $\cD^{act}$ sparse MoE students are able to capture a significantly higher amount of the variance than corresponding MLP students with the same number of active neurons. In particular, for Pythia-410M and Gemma-3-270M there are cases in which the sparse MoE captures the same variance as the MLP using 8 times fewer active neurons. On the other hand, the distillation results in the right column demonstrate that MoE students have little advantage when the data distribution is instead Gaussian (with matched mean and covariance). }\label{fig:main-distillation-results}

\end{figure}

\paragraph{Student-teacher setup} We fit a student MoE model in a student-teacher setup, where the teacher is the pretrained model's MLP layer $\ell$. We distill over both input datasets $\cD^{act,train}$ $\cD^{gauss,train}$ and compare performance. For each student-teacher-dataset tuple we train with Adam for $100$ epochs with mean-squared error and batch size 1024. We sweep over the learning rate hyperparameter in the range 1e-3, 3e-4, and 1e-4, and choose the one with the best final test loss.

We achieve the best distillation performance by distilling to a student MoE with a shared expert. This is a less expressive variant of the MoE defined in \eqref{eq:moe-definition}, but it is used in practice in leading open-source LLMs since it is easier to optimize \cite{liu2024deepseek,team2024qwen2}. The MoE with shared expert has the form:
\begin{align}\label{eq:shared-expert-moe}
f_{MoE+Shared}(x; A,B, \{C_i,D_i\}_{i \in [m]}, g) = f_{MLP}(x;A,B) + f_{MoE}(x;\{C_i,D_i\}_{i \in [m]}, g)\,.
\end{align}
In our experiments, our MoE has single-neuron experts $d_{exp} = 1$, and we pick the inner dimension of the shared MLP to equal the total number of active experts $k$. Therefore, this architecture has $d_{mlp} + kd_{exp} = 2k$ active neurons, and it is strictly less expressive than a pure MoE architecture as in \eqref{eq:moe-definition} with $2k$ active experts. Nevertheless, in our setting the shared expert yields improved performance, likely due to a better optimization landscape (see Appendix~\ref{app:shared-expert-ablation} for an ablation experiment).

Additionally, in order to reduce computational costs we reparametrize the linear routing matrix in the gating function as
\begin{align*}
R = R_1 R_2\,,
\end{align*}
with trainable matrices $R_1 \in \R^{m \times d_{proj}}$ and $R_2 \in R^{d_{proj} \times d}$. Here, $d_{proj}$ is a smaller inner projection dimension  than the outer dimensions of the routing matrix. Surprisingly, we find that although this reparametrization makes the linear router strictly less expressive, it actually significantly improves performance of our distillation procedure (see Appendix~\ref{app:factorized-routing-ablation} for an ablation experiment). Understanding why the Burer-Monteiro reparametrization $R = R_1R_2$ makes linear routers easier to train is an interesting question for future study. We take $d_{proj} = 128$ for Pythia-70M and Gemma-270M, and we take $d_{proj} = 256$ for Pythia-410M.

\paragraph{Distillation results validate the secret sparse computation hypothesis}

In Figure~\ref{fig:main-distillation-results}, we report the fraction of variance explained by distilling the middle MLP layer of Pythia-70M and Gemma-270M to sparse MoE and dense MLP student models of varying sizes. Pythia-410M results are shown in Figure~\ref{fig:teaser-experiments}. The experiment shows that over the activation dataset $\cD^{act}$, the large language model's MLP layer can be significantly better approximated by sparse students than by dense students at a given number of active parameters. On the other hand, for the Gaussian dataset $\cD^{gauss}$ with the same mean and covariance, sparse students yield no significant gain.

\section{Discussion}\label{sec:discussion}

We discuss two high-level takeaways of our paper for deep learning research. First, distillation is a general lens for studying trained networks; see Section~\ref{sec:fruitful-paradigms}. Second, distillation can be used as a test-bed for fast and cheap experimentation with architecture design; see Section~\ref{sec:distillation-arch-design}.

\subsection{A distillation-based paradigm for deep learning theory}\label{sec:fruitful-paradigms}

A classical approach to deep learning theory proceeds by analyzing highly simplified settings -- imposing strong assumptions on the data, architecture, or optimization dynamics -- and deriving formal consequences. While this strategy has yielded valuable insights, it often abstracts away many of the properties that make modern neural networks empirically successful. In this paper, we explore an alternative path to theory development that more closely mirrors how the natural sciences build explanatory frameworks.

In fields such as physics, theories are evaluated by how well they account for the behavior of real systems, rather than by how tractable they are under idealized assumptions. Analogously, we advocate a ``model your model'' approach to deep learning theory: instead of starting from simplified constructions, we treat trained neural networks themselves as the object of study. The goal is to formulate and test hypotheses about the structure these models have actually acquired through training.

Concretely, the paradigm consists of two steps. First, one hypothesizes that a pretrained model exhibits a particular form of structure -- in this case secret sparse MoE structure. Second, the model is distilled into a restricted class of models that explicitly encode this hypothesized structure, allowing one to test whether the structure is sufficient to reproduce the original model’s behavior. In this way, distillation becomes a tool not only for compression, but for probing and validating theoretical claims about learned representations. A restricted version of this paradigm was previously advocated in \cite{boix2024towards}, which focused on the synthetic data setting where models secretly encoded small decision trees, and formulated extracting those trees as a distillation problem.

More broadly, this perspective allows for analyzing real, implemented neural networks whose behavior reflects both algorithmic and hardware considerations. As such, they may resist clean analysis in idealized mathematical settings. By grounding theory in the empirical properties of trained models themselves, the distillation-based paradigm offers a complementary route toward understanding deep learning systems as they are, rather than as simplified abstractions.

\begin{figure}[h]
    \centering

    \begin{minipage}{0.48\textwidth}
        \centering
        \textbf{Analysis of simplified setting} \\ \textbf{(classical deep learning theory)}
        \vspace{0cm}

        \begin{tikzpicture}[
            box_style/.style={
                draw, 
                text width=3.5cm, 
                align=center, 
                minimum height=2cm,
                fill=black!5 %
            },
            arrow_style/.style={
                -{Stealth[length=3mm, width=2mm]}, thick, draw
            }
        ]
            \node[box_style] (hypothesize2) {1. Make simplifying assumption on architecture, data, optimizer, etc…};
            \node[box_style, right=0.5cm of hypothesize2] (distill2) {2. Analyze simplified setting to recover or discover some phenomenon};
            \draw[arrow_style] (hypothesize2) -- (distill2);
        \end{tikzpicture}
    \end{minipage}
    \hspace{\fill} \vrule \hspace{\fill}
    \begin{minipage}{0.48\textwidth}
        \centering
        \textbf{Model your model} \\ \textbf{(this paper)}
        \vspace{0cm}

        \begin{tikzpicture}[
            box_style/.style={
                draw, 
                text width=3.5cm, 
                align=center, 
                minimum height=2cm,
                fill=blue!15 %
            },
            arrow_style/.style={
                -{Stealth[length=3mm, width=2mm]}, thick, draw
            }
        ]
            \node[box_style] (hypothesize1) {1. Hypothesize secret structure in your model};
            \node[box_style, right=0.5cm of hypothesize1] (distill1) {2. Distill your model to check for structure};
            \draw[arrow_style] (hypothesize1) -- (distill1);
        \end{tikzpicture}
    \end{minipage}

    \caption{A popular approach in deep learning theory (left) contrasted with our paper's approach (right) to understanding what a model is doing.}
    \label{fig:framework_comparison}
\end{figure}
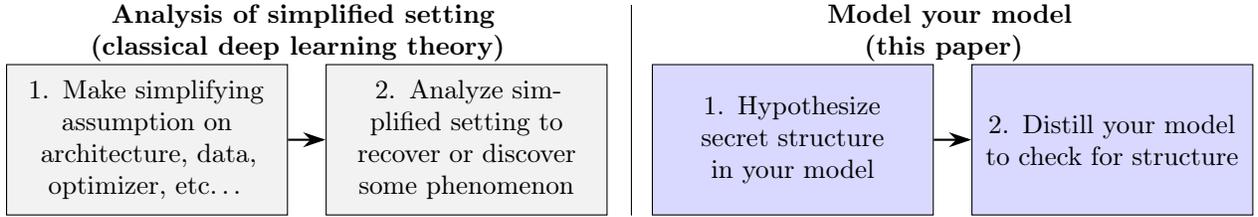

\subsection{Using distillation for fast experimentation with architecture design}\label{sec:distillation-arch-design}

Several major labs have replaced MLP layers with mixture-of-experts (MoE) layers in frontier transformer architectures to improve scalability \citep{liu2024deepseek,meta2025llama4,jiang2024mixtral,mosaic2024introducing,qwen2024qwen,snowflake2024snowflake,agarwal2025gpt}. This trend raises a natural question: why do MoE layers work so well as replacements for standard feedforward layers?

Our results suggest a simple explanation: standard MLPs already exhibit a latent MoE-like structure, making explicit MoEs a natural architectural refinement rather than a completely new component. Crucially, we show that this hypothesis can be tested in a resource-constrained academic setting. Rather than pretraining full models from scratch, our approach distills a single layer of a pretrained network, enabling targeted architectural comparisons at low computational cost. Using this distillation framework, we also find that adding a shared expert to the MoE improves performance, consistent with emerging best practices \cite{dai2024deepseekmoe}.

Taken together, these results demonstrate that distillation provides a practical and efficient tool for rapid architectural experimentation, enabling systematic comparison of candidate designs with minimal compute.

Our experimentation in the distillation setting also allows us to prescribe a new architecture to try: MoEs with \textit{low-rank routers}. When there are many experts, we find that MoEs with low-rank routers are easier to train and yield improved performance over MoEs with full-rank routers (see Section~\ref{sec:experimental-validation} and Appendix~\ref{app:factorized-routing-ablation}). The many-expert regime is increasingly important, as open-source frontier models scale the number of experts because of the performance benefits of granularity \cite{krajewski2024scaling,boix2025power}, and the computational savings of sparsity \cite{shazeer2017outrageously}. Further testing this proposed architecture is a promising direction.

\section{Further related work}\label{sec:further-related}

\paragraph{Connections to SAE design}

Recent work has shown that Sparse Autoencoders (SAEs) capture a large fraction of the variance in neural network activations. While early results focused on ReLU-based SAEs \cite{bricken2023monosemanticity}, subsequent variants—such as JumpReLU \cite{rajamanoharan2024jumping}, gated \cite{rajamanoharan2024improving}, and top-$k$ \cite{makhzani2013k,gao2024scaling,bussmann2024batchtopk} activations -- have demonstrated improved reconstruction performance at fixed sparsity levels.

This paper highlights a structural connection between SAEs and Mixture-of-Experts (MoE) models, suggesting a bidirectional opportunity: advances in SAE design may inform better MoE router architectures, and conversely, insights from MoE routing may guide the development of more expressive or efficient SAEs. Beyond standard SAEs, multi-level variants that learn nested dictionaries \cite{bussmann2025learning} may be an interesting avenue to consider, as they naturally suggest an MoE architectures with hierarchically organized experts operating at different levels of precision.

\paragraph{Mechanistic interpretability and network subcircuits}

A growing body of work in mechanistic interpretability seeks to identify functional subcircuits within neural networks -- collections of parameters or activations that are selectively engaged during computation \cite{marks2024sparse,conmy2023towards} such that the rest of the parameters can be ablated with minimal loss in performance on a task. 
In this work, we effectively find subcircuits at the level of individual MLP layers, showing that these can be well approximated by sparse models. Thus, our results show sparse structure at a finer level than mechanistic interpretability methods that identify subcircuits at the level of heads or between layers.

Another related work is the lottery ticket hypothesis \cite{frankle2018lottery}, which posits that dense neural networks contain much smaller subnetworks capable of achieving comparable performance. Their method to find those subnetworks yields an equivalent-performance model with MLP layers that have sparse weight matrices. The models from the lottery ticket hypothesis are in a different regime from the MoE models considered in this paper, since their sparsity pattern does not depend on the input, and therefore the active and total parameter counts are the same. Additionally, their sparsity pattern is not structured as in MoEs.

Another related approach to finding subcircuits, Automatic Parameter Decomposition \cite{braun2025interpretability}, explicitly finds dictionaries of parameters such that a sparse subset is active in a forward pass. In our work, we do not require the student MoE to be based on dictionary decomposition of the parameters of the initial model, so we can scale beyond toy scales to much larger student models by simply training the student MoE model.

\clearpage

\appendix

\tableofcontents

\clearpage

\section{Additional experiments}\label{app:additional-experiments}
Code for the experiments can be found in this repository: \url{https://github.com/eboix/secret_moe}. We now report results from several ablations.

\subsection{Ablation experiment showing shared expert helps}\label{app:shared-expert-ablation}

We follow the practice of the frontier open-weight architectures \cite{dai2024deepseekmoe,team2024qwen2}, which have a shared expert that is always active in their MoE architectures; see the architecture in \eqref{eq:shared-expert-moe}. Here, we validate through an ablation study that this improves performance compared to a pure mixture of experts without a shared expert \eqref{eq:moe-definition}.

\begin{table}[h]
\centering
\caption{Pythia-70m layer 3 final distillation error for different student models that have different balances of shared expert size and number of active experts in MoE. The result indicates best performance is achieved by balancing shared and MoE number of neurons. Training is for 100 epochs, and we take best of 1e-3, 3e-4, and 1e-4 learning rates.}
\begin{tabular}{c|c}
\begin{tabular}{c}\textbf{Shared expert neurons + }\textbf{Total }\\ \textbf{sparsely-activating MoE neurons }\end{tabular}& 	\textbf{Distillation fraction variance unexplained} $\downarrow$ \\
\hline
0 shared + 256 MoE &	0.0297 \\
\hline
64 shared + 192 MoE &	0.025 \\
\hline
128 shared + 128 MoE &	\textbf{0.0228} \\
\hline
192 shared + 64 MoE &	0.025
\end{tabular}
\end{table}

\subsection{Ablation experiment showing low-rank routing helps}\label{app:factorized-routing-ablation}

In Figure~\ref{fig:ablation-routing-low-rank-or-full-rank}, we compare student MoEs with full-rank routing where the routing matrix $R$ is directly trained, versus low-rank routing where we parametrize it as $R = R_1R_2$. We find that for Gaussian data there is no significant difference between the two procedures, as expected from our theory. On the other hand, for the true activation data distribution the low-rank routing generally yields much better results even though it is strictly less expressive. Since we did not  tune the rank of the routing, this indicates that our distillation fraction variance explained by MoEs may be possibly further improved.
\begin{figure}[h]

\centering

\begin{tabular}{c|c}
\textbf{Activation Data $\cD^{act}$} & \begin{tabular}{c}\textbf{Gaussian data $\cD^{gauss}$} \\ {\textbf{with 
matching mean and covariance}}\end{tabular} \\
\hline
\includegraphics[width=0.45\textwidth]{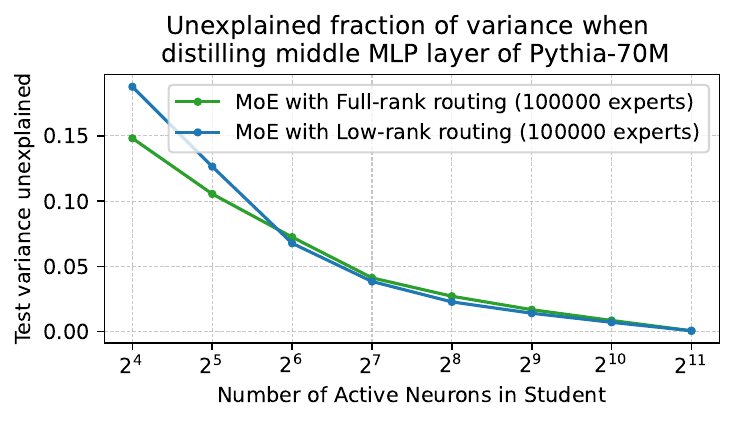} & \includegraphics[width=0.45\textwidth]{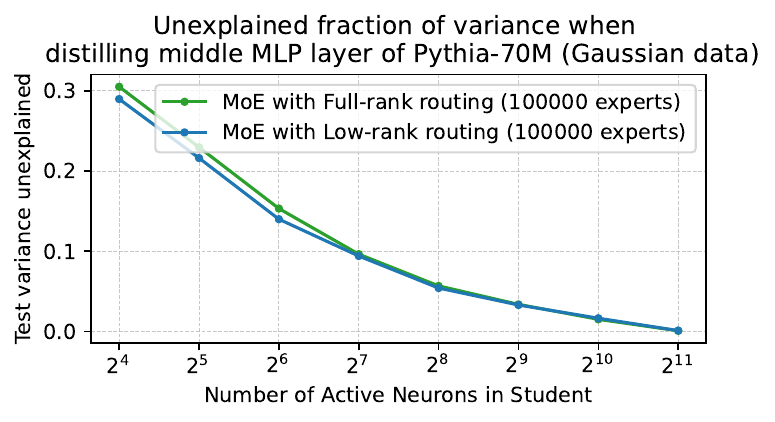} \\
\includegraphics[width=0.45\textwidth]{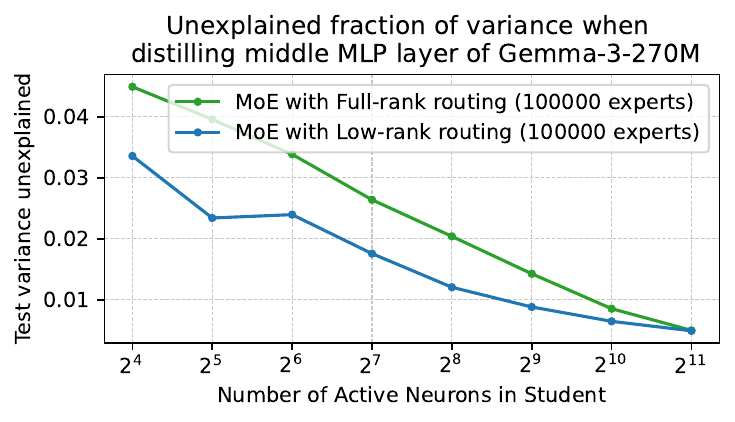} & \includegraphics[width=0.45\textwidth]{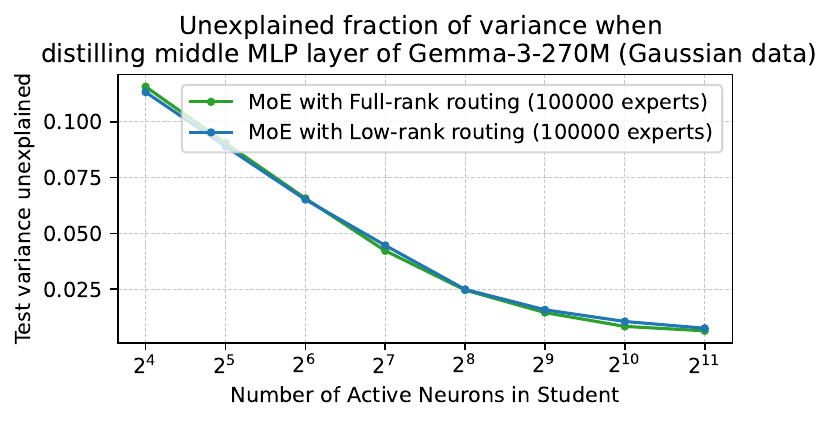} \\
\includegraphics[width=0.45\textwidth]{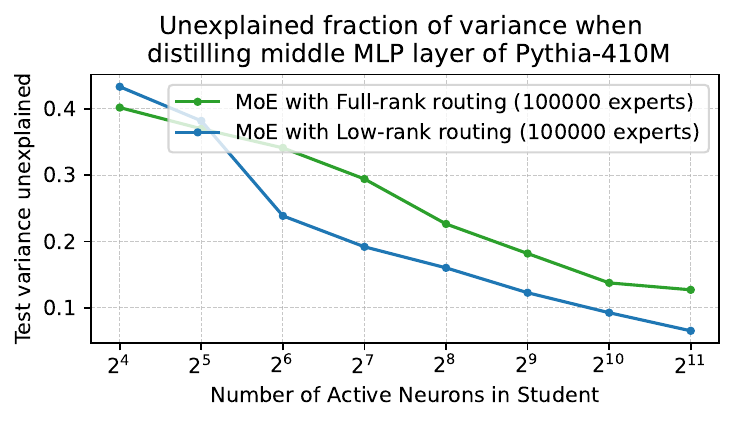} & \includegraphics[width=0.45\textwidth]{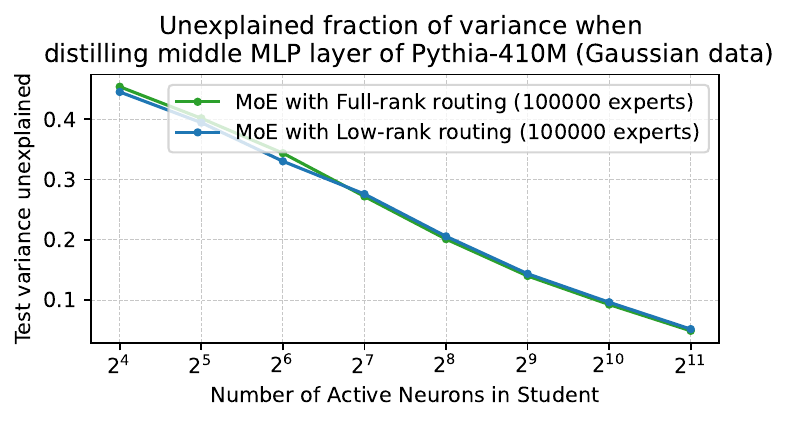}
\end{tabular}
\caption{The unexplained fraction of the variance in the outputs from distilling the middle MLP layer of Pythia-70M (first row), Gemma-270M (second row), and Pythia-410M (third row) to either an MoE with full-rank routing matrix $R \in \R^{m \times d}$, or an MoE with low-rank routing matrix $R = R_1R_2$, where $R_1$ and $R_2$ are trained. As in the main text, for low-rank routing we choose the inner dimension $128$ for Pythia-70M and Gemma-3-270M, and $256$ for Pythia-410M. Notice that distilling to a low-rank routing MoE is generally an improvement over distilling to a full-rank routing MoE, even though the former is less expressive.}\label{fig:ablation-routing-low-rank-or-full-rank}

\end{figure}
\subsection{Loss curves showing convergence of distillation procedure}

In Figures~\ref{fig:loss-curve-stabilize1} through \ref{fig:loss-curve-stabilize6}, we plot the test loss curves during training for several distillations of models to student MLP and MoE models. These plots show that generally the distillation to MLP models has a loss that stabilizes quickly and is fairly independent of the learning rate. On the other hand, the performance of the distillation to MoE models has a higher variability with the learning rate, but still converges. Part of the effect of the convergence is due to cosine learning rate decay, but the loss curves generally seem to stabilize early on -- especially for the MLP student distillations.

\begin{figure}[h]

\centering

\begin{tabular}{c|c}
\textbf{Activation Data $\cD^{act}$} & \begin{tabular}{c}\textbf{Gaussian data $\cD^{gauss}$} \\ {\textbf{with 
matching mean and covariance}}\end{tabular} \\
\hline
\includegraphics[width=0.45\textwidth]{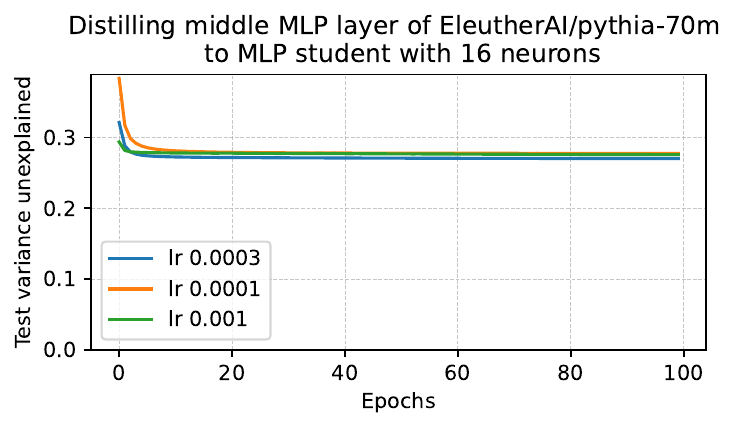} & \includegraphics[width=0.45\textwidth]{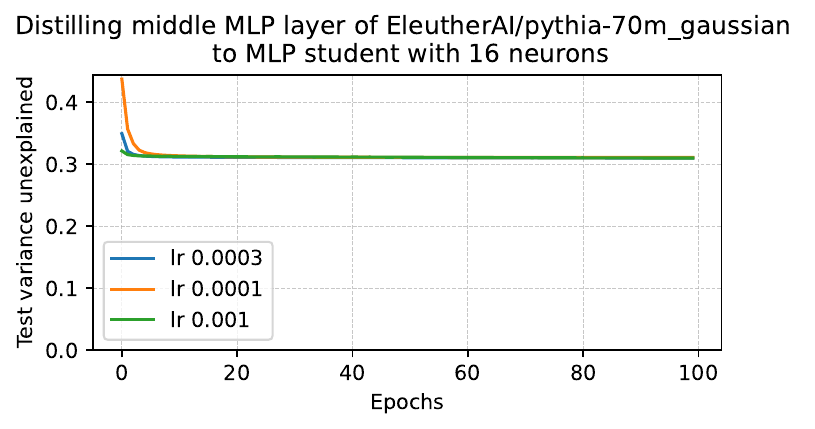} \\
\includegraphics[width=0.45\textwidth]{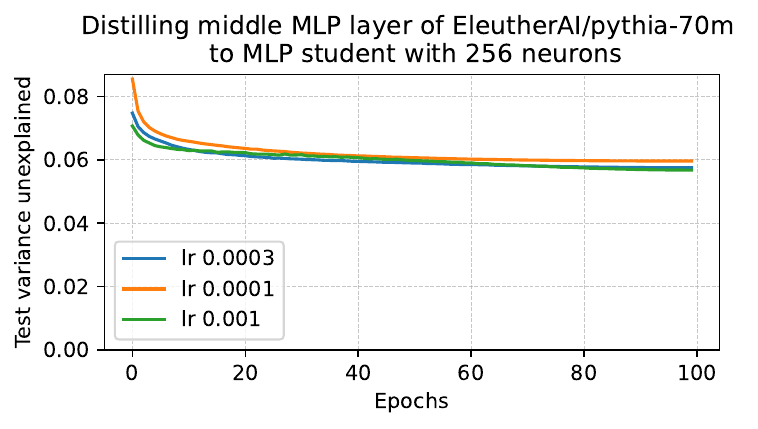} & \includegraphics[width=0.45\textwidth]{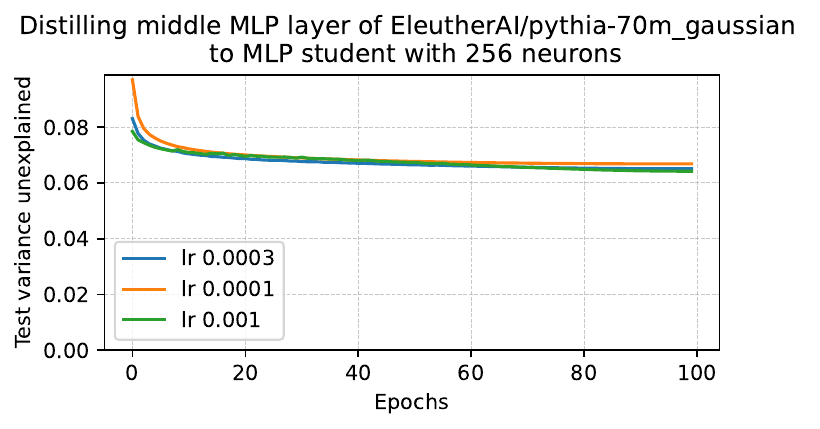}
\end{tabular}
\caption{Test variance unexplained by iteration when training MLP students on Pythia-70M with Adam for 100 epochs. The loss curves converge (although this is due in part to cosine learning rate decay).}\label{fig:loss-curve-stabilize1}
\end{figure}

\begin{figure}[h]

\centering

\begin{tabular}{c|c}
\textbf{Activation Data $\cD^{act}$} & \begin{tabular}{c}\textbf{Gaussian data $\cD^{gauss}$} \\ {\textbf{with 
matching mean and covariance}}\end{tabular} \\
\hline
\includegraphics[width=0.45\textwidth]{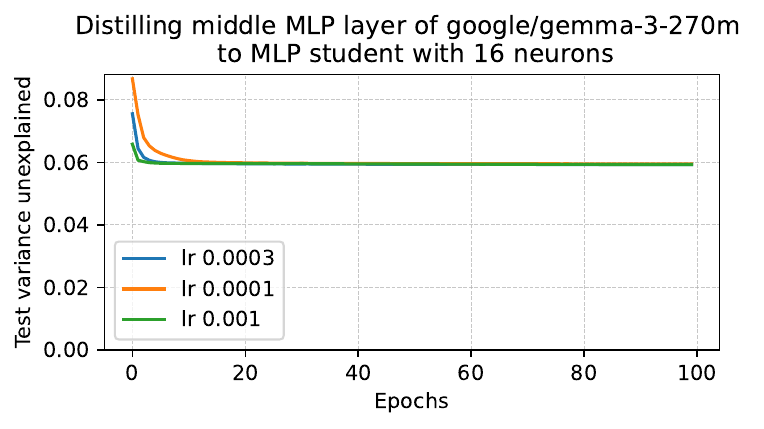} & \includegraphics[width=0.45\textwidth]{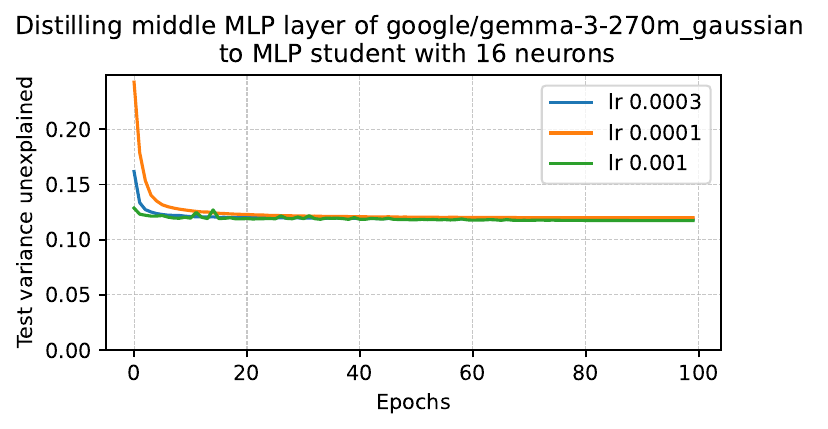} \\
\includegraphics[width=0.45\textwidth]{figs/distillation_mlp_learning_curve_google_gemma-3-270m_gaussian_layer9_hd16.pdf} & \includegraphics[width=0.45\textwidth]{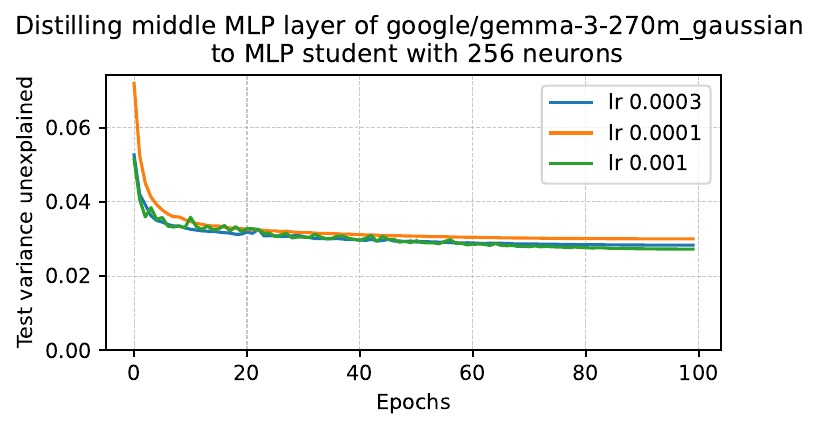}
\end{tabular}
\caption{Test variance unexplained by iteration when training MLP students on Gemma-270M with Adam for 100 epochs. The loss curves converge (although this is due in part to cosine learning rate decay).}\label{fig:loss-curve-stabilize2}
\end{figure}

\begin{figure}[h]

\centering

\begin{tabular}{c|c}
\textbf{Activation Data $\cD^{act}$} & \begin{tabular}{c}\textbf{Gaussian data $\cD^{gauss}$} \\ {\textbf{with 
matching mean and covariance}}\end{tabular} \\
\hline
\includegraphics[width=0.45\textwidth]{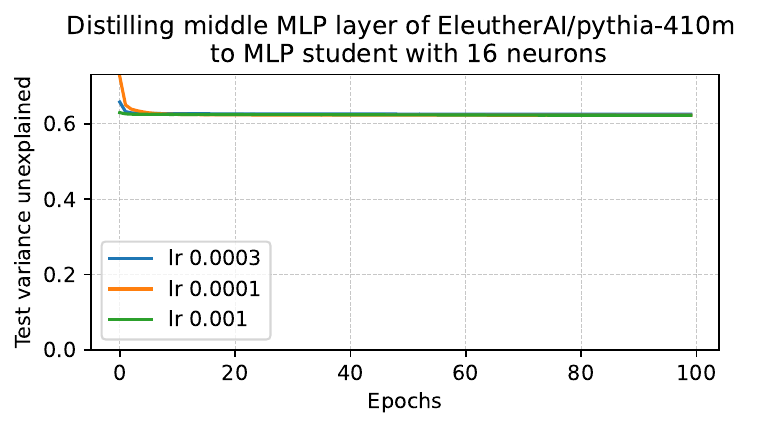} & \includegraphics[width=0.45\textwidth]{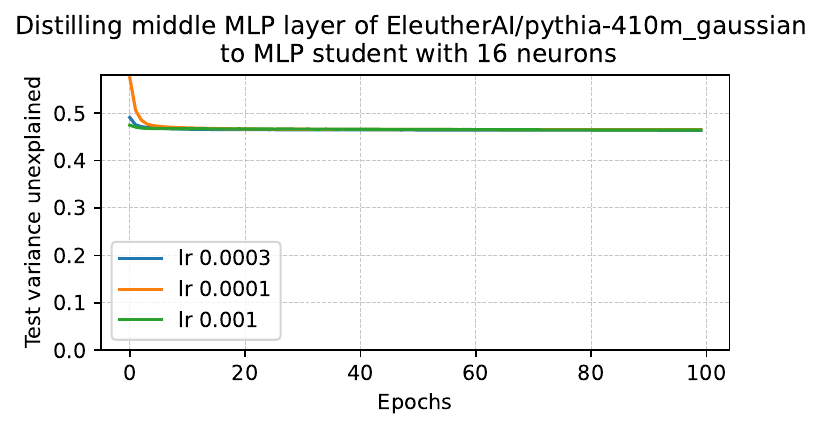} \\
\includegraphics[width=0.45\textwidth]{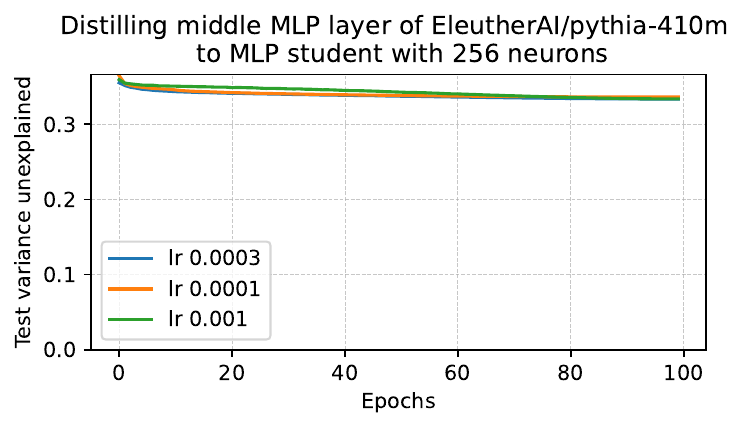} & \includegraphics[width=0.45\textwidth]{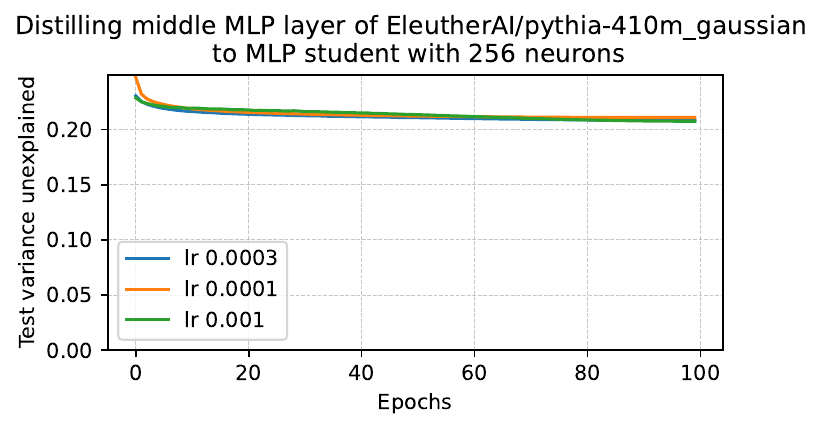}
\end{tabular}
\caption{Test variance unexplained by iteration when training MLP students on Pythia-410M with Adam for 100 epochs. The loss curves converge (although this is due in part to cosine learning rate decay).}\label{fig:loss-curve-stabilize3}
\end{figure}

\begin{figure}[h]

\centering

\begin{tabular}{c|c}
\textbf{Activation Data $\cD^{act}$} & \begin{tabular}{c}\textbf{Gaussian data $\cD^{gauss}$} \\ {\textbf{with 
matching mean and covariance}}\end{tabular} \\
\hline
\includegraphics[width=0.45\textwidth]{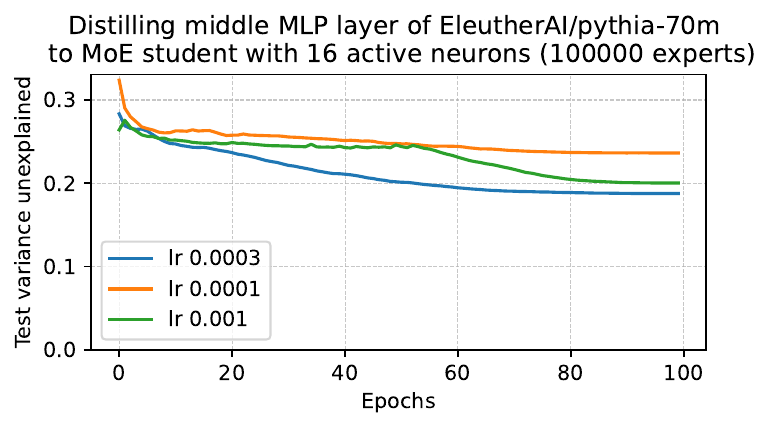} & \includegraphics[width=0.45\textwidth]{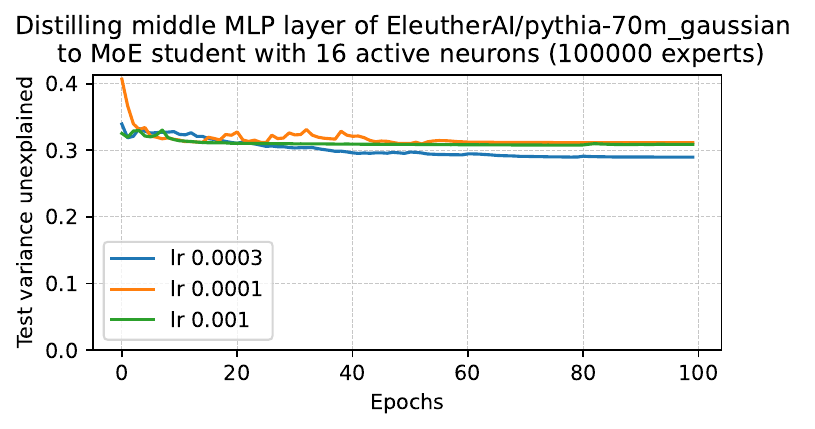} \\
\includegraphics[width=0.45\textwidth]{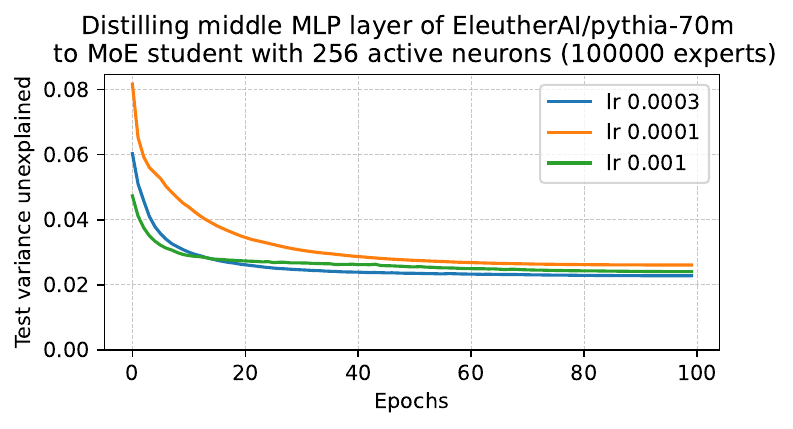} & \includegraphics[width=0.45\textwidth]{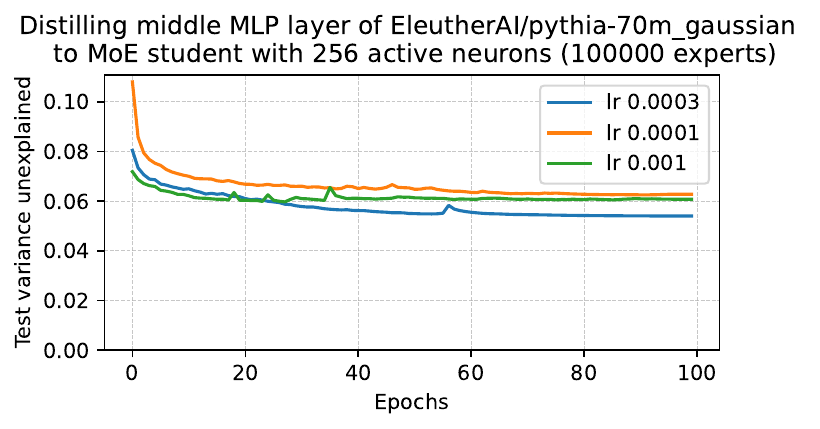}
\end{tabular}
\caption{Test variance unexplained by iteration when training MoE students on Pythia-70M with Adam for 100 epochs. The loss curves converge (although this is due in part to cosine learning rate decay).}\label{fig:loss-curve-stabilize4}
\end{figure}

\begin{figure}[h]

\centering

\begin{tabular}{c|c}
\textbf{Activation Data $\cD^{act}$} & \begin{tabular}{c}\textbf{Gaussian data $\cD^{gauss}$} \\ {\textbf{with 
matching mean and covariance}}\end{tabular} \\
\hline
\includegraphics[width=0.45\textwidth]{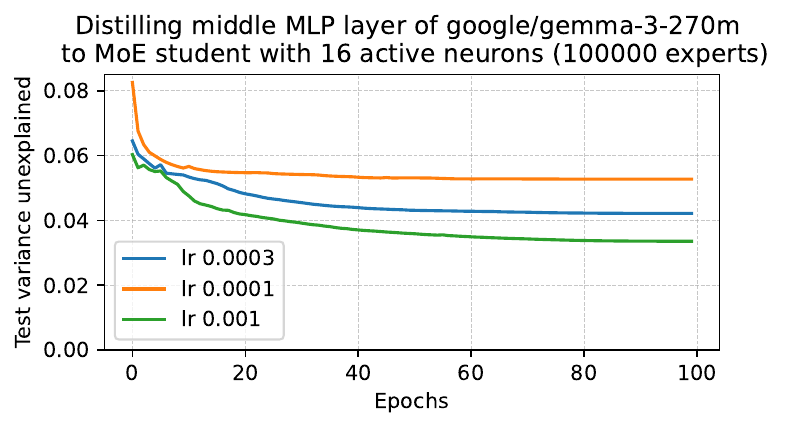} & \includegraphics[width=0.45\textwidth]{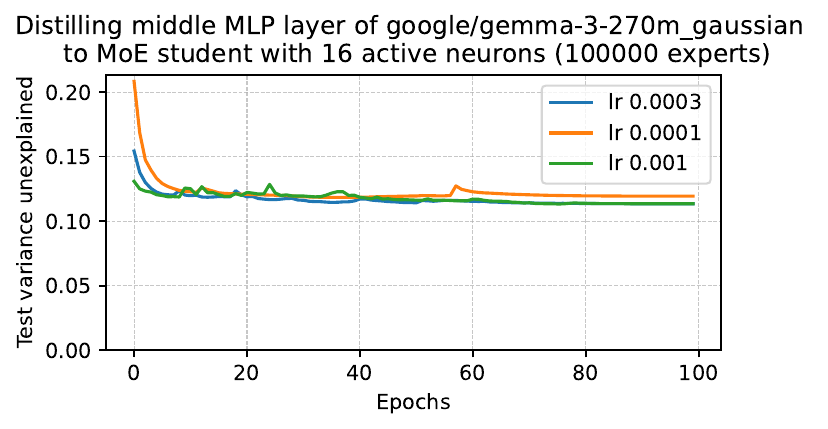} \\
\includegraphics[width=0.45\textwidth]{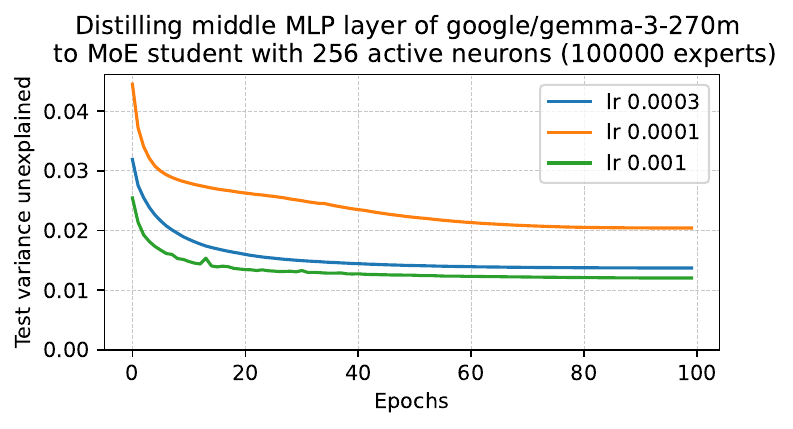} & \includegraphics[width=0.45\textwidth]{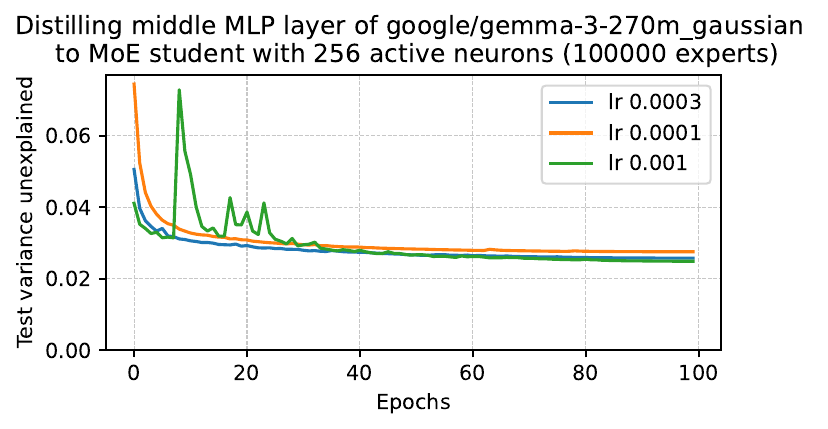}
\end{tabular}
\caption{Test variance unexplained by iteration when training MoE students on Gemma-270M with Adam for 100 epochs. The loss curves converge (although this is due in part to cosine learning rate decay).}\label{fig:loss-curve-stabilize5}
\end{figure}

\begin{figure}[h]

\centering

\begin{tabular}{c|c}
\textbf{Activation Data $\cD^{act}$} & \begin{tabular}{c}\textbf{Gaussian data $\cD^{gauss}$} \\ {\textbf{with 
matching mean and covariance}}\end{tabular} \\
\hline
\includegraphics[width=0.45\textwidth]{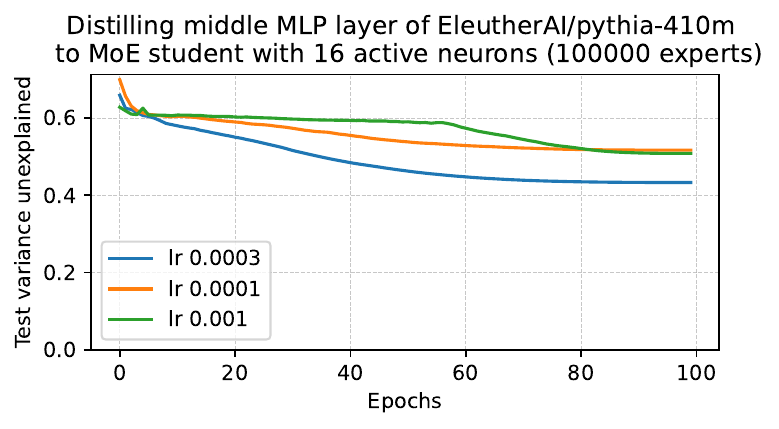} & \includegraphics[width=0.45\textwidth]{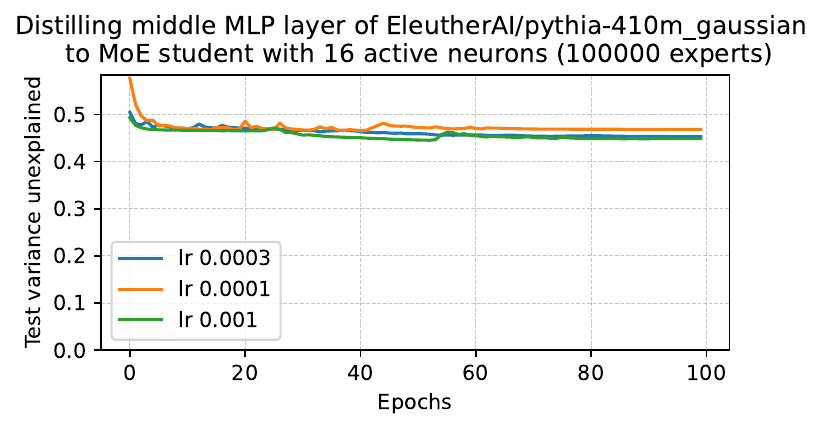} \\
\includegraphics[width=0.45\textwidth]{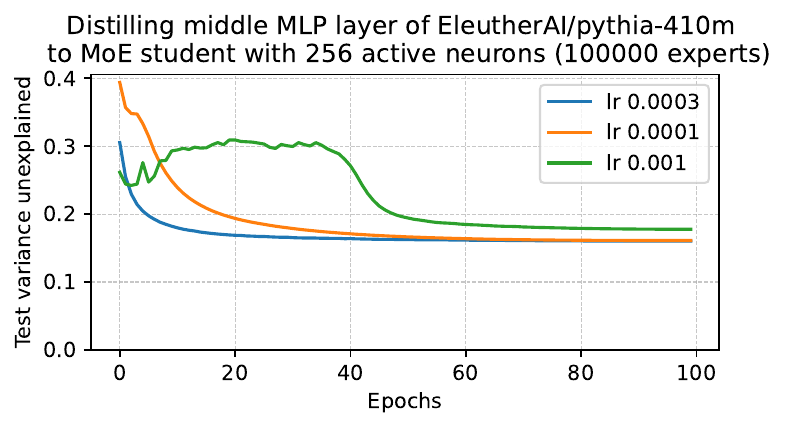} & \includegraphics[width=0.45\textwidth]{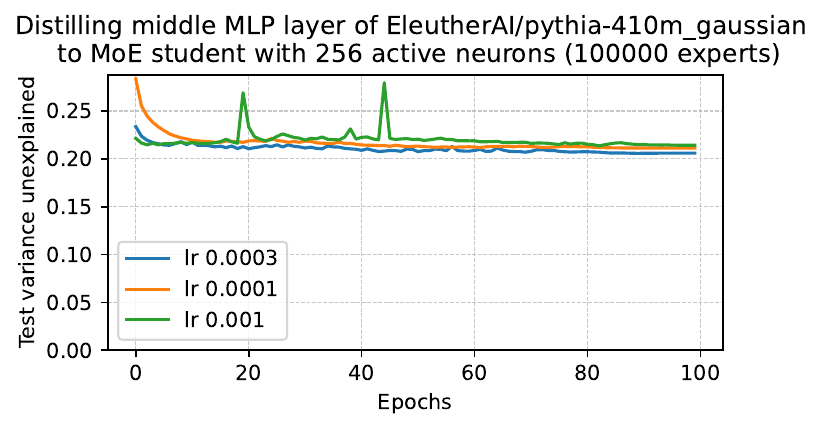}
\end{tabular}
\caption{Test variance unexplained by iteration when training MoE students on Pythia-410M with Adam for 100 epochs. The loss curves converge (although this is due in part to cosine learning rate decay).}\label{fig:loss-curve-stabilize6}
\end{figure}

\clearpage
\section{Proof of Theorem~\ref{thm:inapproximability-moe-gaussian}}\label{app:gaussian-distribution-identity-inapproximable}

In this section, we prove Theorem~\ref{thm:inapproximability-moe-gaussian}, which is restated below for convenience.

\begin{theorem}[Inapproximability of identity by sparse MoEs under Gaussian data distribution]\label{thm:inapproximability-moe-gaussian-restated}
There are universal constants $c,c' > 0$ such that the following is true. Under isotropic Gaussian input distribution $D = N(0,I_d/d)$, \textbf{there is no} $(m,k)$-MoE with hard gating function and a number of active neurons $kd_{exp} < d/2$ and number of expert configurations $m^k \leq \exp(cd)$, that can $c'$-approximate the function $f^*(x) = x$.
\end{theorem}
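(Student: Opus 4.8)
I would follow the outline in the proof sketch, making the analytic core quantitative. Write $g$ for the hard gating function and, for each active set $S\subseteq[m]$ with $|S|\le k$, let $U_S=\{x\in\R^d:\supp(g(x))=S\}$, so that $\{U_S\}_S$ partitions $\R^d$ up to a measure-zero tie set. The number of active sets is at most $(m+1)^k$ (map $S=\{a_1<\dots<a_j\}$ with $j\le k$ to $(a_1,\dots,a_j,0,\dots,0)\in(\{0\}\cup[m])^k$), and since $d_{exp}\ge 1$ forces $k<d/2$ we get $|\mathcal S|\le(m+1)^k\le 2^k m^k\le\exp\!\big((c+\tfrac{\ln 2}{2})d\big)$. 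Set $\mathcal S_{\mathrm{heavy}}=\{S:\P[x\in U_S]\ge\tfrac{1}{3|\mathcal S|}\}$; a union bound over the remaining sets gives $\P[x\in\bigcup_{S\in\mathcal S_{\mathrm{heavy}}}U_S]\ge 2/3$.

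\textbf{Reduction to a projection on each region.} On $U_S$, hard gating makes $\fmoe(x)=\tfrac{1}{|S|}\sum_{i\in S}A_i\sigma(B_i x)$, an MLP of width $|S|d_{exp}\le kd_{exp}$. Hence there is a linear map $\Pi_S$ of rank $p_S\le kd_{exp}<d/2$ (orthonormal rows spanning $\bigcup_{i\in S}\mathrm{rowspace}(B_i)$) and a function $f_S$ with $\fmoe(x)=f_S(\Pi_S x)$ for all $x\in U_S$. Since the regions partition the space,
\begin{align*}
\E_{x\sim D}\big[\|x-\fmoe(x)\|^2\big]\;\ge\;\sum_{S\in\mathcal S_{\mathrm{heavy}}}\E\!\big[\mathbf{1}[x\in U_S]\,\|x-f_S(\Pi_S x)\|^2\big].
\end{align*}

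\textbf{Core estimate (the main obstacle).} It remains to show: there is a universal $c_1>0$ so that for every measurable $U\subseteq\R^d$ with $\P[x\in U]\ge 2\exp(-c_0 d)$, every linear $\Pi$ of rank $<d/2$, and every function $h$, one has $\E[\mathbf{1}[x\in U]\,\|x-h(\Pi x)\|^2]\ge c_1\,\P[x\in U]$. This is the flavor of the technical lemma borrowed from \cite{boix2025power}; I would prove it as follows. Put $W=\ker\Pi$, of dimension $d'>d/2$, and split $x=x_W+x_{W^\perp}$ into independent isotropic Gaussians; since $\Pi x$ determines $x_{W^\perp}$, we may write $h(\Pi x)=\tilde h(x_{W^\perp})$, and projecting the error onto $W$ gives $\|x-h(\Pi x)\|^2\ge\|x_W-(\tilde h(x_{W^\perp}))_W\|^2$. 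Conditioning on $x_{W^\perp}=y$ reduces the bound, slice by slice, to a Gaussian small-ball estimate: for $Z\sim N(0,I_{d'}/d)$ with $d'>d/2$, any center $a$, and any $V$ with $\P[Z\in V]\ge\exp(-c_0 d)$, one has $\E[\mathbf{1}[Z\in V]\|Z-a\|^2]\ge c_1\P[Z\in V]$, which I would prove by choosing $\rho$ with $\P[\|Z-a\|<\rho]=\tfrac12\P[Z\in V]$, bounding $\P[\|Z-a\|<\rho]\le\P[\|Z\|<\rho]$ via Anderson's inequality, and invoking a $\chi^2$ lower-tail bound together with $d'>d/2$ to force $\rho^2=\Omega(1)$, giving $\E[\mathbf{1}[Z\in V]\|Z-a\|^2]\ge\tfrac{\rho^2}{2}\P[Z\in V]$. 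Slices $y$ whose conditional mass is below $\exp(-c_0 d)$ contribute at most $\exp(-c_0 d)\le\tfrac12\P[x\in U]$ in total and are absorbed.

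\textbf{Conclusion.} Fixing $c_0:=c+\tfrac{\ln 2}{2}+\ln 6$ guarantees every heavy region has $\P[x\in U_S]\ge\tfrac{1}{3|\mathcal S|}\ge 2\exp(-c_0 d)$, so the core estimate applies to each term, and summing gives
\begin{align*}
\E_{x\sim D}\big[\|x-\fmoe(x)\|^2\big]\;\ge\;\tfrac{c_1}{2}\sum_{S\in\mathcal S_{\mathrm{heavy}}}\P[x\in U_S]\;\ge\;\tfrac{c_1}{3},
\end{align*}
so the theorem holds with $c':=c_1/3$ and $c$ any fixed positive constant. The delicate point is the core estimate: the naive reason ``a rank-$<d/2$ projection cannot reconstruct a generic Gaussian point'' can fail on a cleverly shaped region $U_S$ hugging a low-dimensional surface, and one must show this failure mode forces $U_S$ to have exponentially small mass — exactly the trade-off ruled out by the counting in Step 1 and packaged cleanly by the cited lemma of \cite{boix2025power}.
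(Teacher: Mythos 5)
Your proposal is correct, and its skeleton — partitioning $\R^d$ into the regions $U_S$, discarding the light regions via a union bound over the at most $(m+1)^k$ active sets, and reducing each heavy region to the impossibility of reconstructing a Gaussian vector from a rank-$<d/2$ projection — is exactly the paper's. Where you genuinely diverge is in the core estimate: the paper black-boxes Lemma~C.2 of \cite{boix2025power}, which gives the refined bound $\E_{x\sim\mu|_U}\|Ax-h(\Pi x)\|^2\ge\frac{c}{d}\sum_{i\ge C(1+\log(1/\mu(U)))}\sigma_i^2(A\Pi^\perp)$ for an arbitrary linear target $A$, whereas you prove from scratch the special case $A=I$ with a flat constant lower bound, by splitting $x$ along $\ker\Pi$, conditioning on the complementary coordinates, and running a Gaussian small-ball argument (Anderson's inequality plus a $\chi^2$ lower-tail bound to force $\rho^2=\Omega(1)$ whenever the slice has mass at least $e^{-c_0 d}$, with the sub-threshold slices absorbed because $\P[U]\ge 2e^{-c_0d}$). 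Your bookkeeping of the constants ($c_0=c+\tfrac{\ln 2}{2}+\ln 6$ so that heavy regions clear the small-ball threshold, and the final bound $c'=c_1/3$ from the heavy regions carrying mass $\ge 2/3$) checks out. The trade-off is the expected one: the cited lemma is more general and quantitatively sharper (it tracks exactly how many directions of $A\Pi^\perp$ survive the conditioning, which is what lets the paper state the intermediate bound $\frac{c'}{d}(d-C'k\log m-kd_{exp})$), while your argument is self-contained, elementary, and entirely sufficient for the identity target needed here.
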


\subsection{Technical results from \cite{boix2025power}}
The definitions and technical claim that we will use from \cite{boix2025power} are as follows.
\begin{definition}
For a distribution $\mu$, and a measurable set $U$, let $\mu|_U$ be the probability measure from restricting $\mu$ to $U$. I.e., $\mu|_U(A) = \mu(A \cap U) / \mu(U)$ for all measurable sets $A$. Given a distribution $\mu$ and measurable set of nonzero measure, additionally define $\Sigma_U = \cov(X,X)$ for $X \sim \mu|_U$ to be the conditional covariance.
\end{definition}

The technical claim that we use is an error bound for approximating linear functions by a nonlinear function that depends on a propert subspace of the input.

\begin{proposition}[Lemma C.2 of \cite{boix2025power}]\label{prop:inapprox-prop-from-boix-rigollet}
There are universal constants $C,c > 0$ such that the following is true for $\mu = N(0,I_d / d)$. Let $U \subseteq \R^d$ be a measurable set, and let $\Pi \in \R^{d \times d}$ be a projection matrix to a subspace of dimension $p \leq 99d / 100$, and let $h : \R^d \to \R^d$ be a measurable function, and let $A \in \R^{d \times d}$ be a linear transformation. Then,
\begin{align*}
\E_{x \sim \mu|_U} \|Ax - h(\Pi x)\|^2 \geq \frac{c}{d} \sum_{i \geq C (1 + \log(1/\mu(U)))} \sigma_i^2(A\Pi^{\perp})\,,
\end{align*}
where $\Pi^{\perp} \in \R^{d \times d}$ is the orthogonal projection to $\Pi$.
\end{proposition}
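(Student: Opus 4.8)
The plan is to reduce the bound to a sum of one–dimensional conditional–variance estimates, one per singular direction of $A\Pi^\perp$, and then control those variances by a maximum–entropy argument applied slicewise to the Gaussian restricted to $U$.

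First I would decompose $x = \Pi x + \Pi^\perp x =: x_1 + x_2$. Under $\mu = N(0,I_d/d)$ the parts $x_1,x_2$ are independent Gaussians, so all coupling comes from conditioning on $U$. Write the singular value decomposition $A\Pi^\perp = \sum_i \sigma_i u_i w_i^\top$ with orthonormal left/right singular vectors $u_i, w_i$ (the $w_i$ lying in the range $W$ of $\Pi^\perp$), set $\xi_i = w_i^\top x_2$, and note that $h(\Pi x)$ depends on $x_1$ only. Writing $g(x_1) = A\Pi x - h(\Pi x)$ gives $Ax - h(\Pi x) = g(x_1) + \sum_i \sigma_i u_i \xi_i$, so projecting onto the orthonormal family $\{u_i\}$ and using $u_i^\top u_j = \delta_{ij}$,
\begin{align*}
\E_{\mu|_U}\|Ax - h(\Pi x)\|^2 \ge \sum_i \E_{\mu|_U}[(a_i(x_1) + \sigma_i \xi_i)^2] \ge \sum_i \sigma_i^2\, \E_{x_1 \sim \rho}[\Var_{\mu|_U}(\xi_i \mid x_1)],
\end{align*}
where $a_i = u_i^\top g$ is $x_1$–measurable, the second inequality optimizes out $a_i$ pointwise (the minimizer is $-\sigma_i\E[\xi_i\mid x_1]$), and $\rho$ is the $x_1$–marginal of $\mu|_U$. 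It therefore suffices to lower bound $\sum_i \sigma_i^2\, w_i^\top \bar\Sigma w_i$, where $\bar\Sigma = \E_{x_1\sim\rho}[\Sigma(x_1)]$ and $\Sigma(x_1) = \Cov_{\mu|_U}(x_2\mid x_1)$.

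The core of the argument is a maximum–entropy lemma controlling each $\Sigma(x_1)$. For fixed $x_1$, the conditional law of $x_2$ under $\mu|_U$ is $N(0,I/d)$ restricted to the slice $U_{x_1} = \{x_2 : (x_1,x_2)\in U\}$, of Gaussian mass $q(x_1)$. Rescaling to $Y = \sqrt d\, x_2 \sim N(0,I_n)$ restricted to an event $E$ of probability $q$, a direct computation of the restricted density gives $h(\nu) = \tfrac n2\log(2\pi) + \tfrac12\E_\nu\|Y\|^2 + \log q$; comparing with the Gaussian maximum–entropy bound $h(\nu)\le\tfrac12\log((2\pi e)^n\det\Cov_\nu(Y))$ and using $\E_\nu\|Y\|^2\ge\Tr\Cov_\nu(Y)$ yields
\begin{align*}
\sum_i (\lambda_i - 1 - \log\lambda_i) \le 2\log(1/q), \qquad \lambda_i = \lambda_i(\Cov_\nu(Y)).
\end{align*}
Since $\lambda - 1 - \log\lambda \ge 0$ and is at least an absolute constant once $\lambda$ drops below a fixed threshold $c_0 < 1$, at most $O(\log(1/q))$ eigenvalues of $\Cov_\nu(Y)$ fall below $c_0$; equivalently, at most $O(\log(1/q(x_1)))$ eigenvalues of $\Sigma(x_1)$ lie below $c_0/d$.

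Finally I would aggregate over $x_1$. Writing $\Sigma(x_1)\succeq\tfrac{c_0}{d}(I - P(x_1))$ with $P(x_1)$ the projection onto the low–eigenvalue directions (so $\mathrm{rank}\,P(x_1) = O(\log(1/q(x_1)))$) and taking $\rho$–expectations gives $\bar\Sigma\succeq\tfrac{c_0}{d}(I-\bar P)$ with $0\preceq\bar P\preceq I$ and $\Tr\bar P \le \E_\rho[\mathrm{rank}\,P(x_1)] = O(\E_\rho[\log(1/q)])$. The key mass estimate is $\E_\rho[\log(1/q)] = O(1+\log(1/\mu(U)))$: since $\rho$ has density proportional to $\phi_1(x_1)q(x_1)$ and $\mu(U) = \E_{\mu_1}[q]$, splitting on whether $q\ge\mu(U)^2$ and using $q\log(1/q)\le\sqrt q$ on the small part controls the average despite the blow–up of $\log(1/q)$ for tiny slices. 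Setting $K = \Tr\bar P = O(1+\log(1/\mu(U)))$ and using that the $\sigma_i^2$ are non‑increasing, a rearrangement bound gives $\sum_i\sigma_i^2 w_i^\top\bar P w_i \le \sum_{i\le\lceil K\rceil}\sigma_i^2$, hence $\sum_i\sigma_i^2 w_i^\top\bar\Sigma w_i \ge \tfrac{c_0}{d}\sum_{i>\lceil K\rceil}\sigma_i^2(A\Pi^\perp)$, which is the claimed inequality after renaming constants. The main obstacles I anticipate are (i) the averaging step, where slices with arbitrarily small $q(x_1)$ must be shown to contribute negligibly precisely because $\rho$ down‑weights them by a factor $q$ — this is where the $q\log(1/q)\le\sqrt q$ trick is essential — and (ii) checking that $\Sigma(x_1)$ is nondegenerate for $\rho$–a.e.\ $x_1$ (true since positive Gaussian mass forces full‑dimensional support) so that $\log\det$ is finite. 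The hypothesis $p\le 99d/100$ is not essential to this route and can be absorbed into constants; it only ensures $W$ is nonempty.
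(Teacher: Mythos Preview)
The paper does not prove this proposition; it is quoted verbatim as Lemma~C.2 of \cite{boix2025power} and used as a black box in the proof of Theorem~\ref{thm:inapproximability-moe-gaussian}. So there is no ``paper's own proof'' to compare against.

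Assessing your proposal on its own merits: the argument is sound and the outline would go through. The reduction to $\sum_i \sigma_i^2\, w_i^\top \bar\Sigma w_i$ via conditional variance is clean, and the maximum-entropy step is correct as written: from $h(\nu)=\tfrac{n}{2}\log(2\pi)+\tfrac12\E_\nu\|Y\|^2+\log q$ and the Gaussian bound one obtains $\sum_i(\lambda_i-1-\log\lambda_i)\le 2\log(1/q)$, which indeed caps the number of small eigenvalues by $O(\log(1/q))$. The averaging step is the one place that deserves a little extra care, but your sketch is right: on $\{q\ge\mu(U)^2\}$ one has $\log(1/q)\le 2\log(1/\mu(U))$, while on $\{q<\mu(U)^2\}$ the inequality $q\log(1/q)\le (2/e)\sqrt q$ together with $\sqrt q<\mu(U)$ gives a contribution of $O(1)$ after dividing by $\mu(U)$. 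The final rearrangement (maximizing $\sum_i \sigma_i^2\,w_i^\top\bar P w_i$ over $[0,1]$-valued weights summing to $K$) is standard. Your remark about $p\le 99d/100$ is accurate: your argument never uses it, and the hypothesis only ensures the right-hand side is nontrivial (i.e., that $d-p$ exceeds the threshold $C(1+\log(1/\mu(U)))$ in interesting cases).
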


\subsection{Proof of Theorem~\ref{thm:inapproximability-moe-gaussian}}

We may now proceed to prove the theorem. Let $\fmoe$ be the $(m,k)$-MoE that approximates the identity function on distribution $\mu = N(0,I_d / d)$. For any $S \subseteq [m]$, let $U_S = \{x \in \R^d : \mathrm{support}(g(x)) = S\}$ be the region on which the subset of experts $S$ is active.

Define the set of experts that have lower-bounded probability to be 
\begin{align*}
\mathcal{S} = \{S : \P[x \in U_S] \geq 1/(3(m+1)^k)\}\,.
\end{align*}

For any $S \subseteq [m]$ with $|S| \leq k$, let $f_S : \R^{kd_{exp}} \to \R^d$ be a function and let $\Pi : \R^{kd_{exp} \times d}$ be a projection such that 
\begin{align*}
\fmoe(x) = f_S(\Pi_S x) \mbox{ for all } x \in U_S\,.
\end{align*}
This is guaranteed, since $\fmoe$ is a $(m,k)$-MoE with expert dimension $d_{exp}$.

By Proposition~\ref{prop:inapprox-prop-from-boix-rigollet}, there are universal constants $c,C > 0$ such that, for any $S \in \mathcal{S}$ we have
\begin{align*}
\E_{x \sim \mu|_{U_S}}[\|f^*(x) - \fmoe(x)\|^2] &= \E_{x \sim \mu|_{U_S}}[\|x - f_S(\Pi_S x)\|^2] \\
&\geq \frac{c}{d} \sum_{i \geq C(1 + \log(1/\mu(U_S)))} \sigma_i^2(\Pi_S^{\perp}) \\
&\geq \frac{c}{d} (d- C(1 + \log(1/\mu(U_S))) - \rank(\Pi_S)) \\
&\geq \frac{c}{d} (d- C(1 + \log(1/\mu(U_S))) - kd_{exp}) \\
&\geq \frac{c}{d} (d- C(1 + k \log(3(m+1))) - kd_{exp})\,.
\end{align*}
Therefore,
\begin{align*}
\E_{x \sim \mu}[\|f^*(x) - \fmoe(x)\|^2] &= \sum_{S \subseteq [m]} \P[x \in U_S] \E_{x \sim \mu|_{U_S}}[\|f^*(x) - \fmoe(x)\|^2] \\
&\geq \sum_{S \in \cS} \P[x \in U_S] \E_{x \sim \mu|_{U_S}}[\|f^*(x) - \fmoe(x)\|^2] \\
&\geq \sum_{S \in \cS} \P[x \in U_S] \frac{c}{d} (d- C(1 + k \log(3(m+1))) - kd_{exp}) \\
&\geq \frac{c}{d} (d- C(1 + k \log(3(m+1))) - kd_{exp}) (1 - \sum_{S \not\in \cS} \P[x \in U_S]) \\
&\geq \frac{c}{d} (d- C(1 + k \log(3(m+1))) - kd_{exp}) (1 - (m+1)^k (1/(3(m+1)^k))) \\
&\geq \frac{2c}{3d} (d- C(1 + k \log(3(m+1))) - kd_{exp}) \\
&\geq \frac{c'}{d} (d-C'k\log(m)-kd_{exp})\,,
\end{align*}
for universal constants $C,c' > 0$.

Theorem~\ref{thm:inapproximability-moe-gaussian} follows from this inequality. \qed

\section{Extension of MoE approximability result to nonlinear functions}\label{app:moe-nonlinear-extension}

We now strengthen the theoretical evidence for Hypothesis~\ref{hyp:main-hypothesis} by showing that the approximability result holds beyond linear functions, to a class of nonlinear functions that have the property that interactions between the dictionary features are of low complexity. Our result here naturally generalizes Theorem~\ref{thm:approx-linear-with-sparse-dictionary-data} to nonlinear functions.

\subsection{Basic definitions}
For simplicity, we consider target functions that are homogeneous degree-$p$ polynomials (this result can be generalized to non-homogenous polynomials but with more notational heaviness). \begin{definition} Homogeneous degree-$p$ polynomials are functions $f : \R^d \to \R^d$ of the form
\begin{align}\label{eq:homogeneous-poly}
[f(x)]_{j_1} = \sum_{j_2,\ldots,j_{p+1} \in [d]} A_{j_1,\ldots,j_{p+1}} x_{j_1} x_{j_2} \cdots x_{j_{p+1}}\,,
\end{align}
for some tensor $A \in (\R^{d})^{\otimes (p+1)}$.
\end{definition}

When $p = 1$, notice that we recover linear functions $f(x) = Ax$ for a matrix $A \in \R^{d \times d}$. We define an operator norm, a tensor-vector product, and a rank for these tensors as follows:
\begin{definition}
For a tensor $A \in (\R^d)^{\otimes (p+1)}$, we define the operator norm $$\|A\|_{\mathrm{op}} = \max_{v_1,\ldots,v_{p+1} \in \R^d \sm 0} \frac{\sum_{j_1,\ldots,j_{p+1} \in [d]} A_{j_1,\ldots,j_{p+1}} [v_1]_{j_1} [v_2]_{j_2} \cdots [v_{p+1}]_{j_{p+1}}}{\|v_1\|\|v_2\| \cdots \|v_{p+1}\|},$$
\end{definition}
\begin{definition}[Tensor-vector product]\label{def:tensor-vector-multiplication}
Given a tensor $A \in (\R^d)^{\otimes (p+1)}$ and a vector $v \in \R^d$, we define the tensor-vector product along the last index $Av \in (\R^{d})^{\otimes p}$, which is given by
\begin{align*}
[Av]_{j_1,\ldots,j_p} = \sum_{j_{p+1} \in [d]} A_{j_1,j_2,\ldots,j_p,j_{p+1}} v_{j_{p+1}}
\end{align*}
\end{definition}
\begin{definition}[Tensor rank]
We say that $B \in (\R^{d})^{\otimes p}$, is rank at most $r$ if there are vectors $u_{ij} \in \R^d$ for $i \in [r]$ and $j \in [p]$ such that
\begin{align*}
B = \sum_{i=1}^r u_{i1} \otimes u_{i2} \otimes \cdots \otimes u_{ip}\,.
\end{align*}
\end{definition}

With these definitions in hand, we may define what it means for a tensor $A$ to have ``low rank'' interactions between the features in the dictionary.

\begin{definition}[Low-rank feature interactions]
Given a dictionary $v_1,\ldots,v_m \in \R^d$ and a homogeneous polynomial $f(x)$ of the form \eqref{eq:homogeneous-poly} for a tensor $A \in (\R^{d})^{\otimes (p+1)}$, we say that the interactions between dictionary features are rank $\leq r$ if
\begin{align*}
\mbox{rank}(Av_i) \leq r\,,
\end{align*}
for all $i \in [m]$.
\end{definition}

Notice that our notions of operator norm, tensor-vector product and rank agree with the corresponding standard definitions of operator norm, matrix-vector product and rank for matrices when $p = 1$. Thus, our results in this section will naturally generalize the case of linear functions developed in Theorem~\ref{thm:approx-linear-with-sparse-dictionary-data}. Indeed, one can observe that linear functions are a special case of functions with rank-1 feature interactions.

\subsection{Statement of approximability result for nonlinear functions}
Our generalization of Theorem~\ref{thm:approx-linear-with-sparse-dictionary-data} to nonlinear functions is as follows.

\begin{theorem}[Low-rank feature interaction functions are approximable by sparse MoEs]\label{thm:approx-nonlinear-with-sparse-dictionary-data}
Suppose that the distribution $D$ is supported in the unit ball, and is an $(m,k)$-dictionary-sparse distribution for a $(\gamma,k)$-approximately-orthogonal dictionary.
Suppose also that $f^* : \R^d \to \R^d$ is a homogeneous polynomial given by a tensor $A \in (\R^d)^{\otimes (p+1)}$ according to \eqref{eq:homogeneous-poly} and has rank-$r$ interactions between dictionary features.

Then, there is a hard-gated $(m,k)$-MoE with $d_{expert} \leq O_p(r)$ that $\gamma \|A\|_{\mathrm{op}}$-approximates $f^*$.
\end{theorem}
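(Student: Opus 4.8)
The plan is to generalize the construction in the proof of Theorem~\ref{thm:approx-linear-with-sparse-dictionary-data}: keep one expert per dictionary element, let the $i$-th expert's gate direction ``peel off'' one copy of the input along $\hat v_i$, and use the low-rank feature-interaction hypothesis to argue that what each expert must still compute is cheap. Concretely, I would take the same hard gating function as in the linear case: on $x$ in the support of $D$ it activates a set $\{i_1,\dots,i_k\}$ with $x\in\mathrm{span}\{v_{i_1},\dots,v_{i_k}\}$ (which exists by $(m,k)$-dictionary-sparsity), placing weight $1/k$ on each active expert. I would also replace the linear activation used in the $p=1$ case by the monomial activation $\sigma(t)=t^{p}$.

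For the experts, the $i$-th expert would compute
\[
\fmlp(x;A_i,B_i)\;=\;k\,\langle \hat v_i,x\rangle\,(A\hat v_i)(x,\dots,x),
\]
where $A\hat v_i\in(\R^d)^{\otimes p}$ is the tensor--vector product of Definition~\ref{def:tensor-vector-multiplication} and the remaining $p-1$ input legs are contracted against $x$. I would then verify this is realizable with $O_p(r)$ neurons: by the low-rank feature-interaction hypothesis $A\hat v_i$ has CP-rank $\le r$ (rescaling $v_i$ does not change the rank), so $A\hat v_i=\sum_{s=1}^{r}u_{s,1}^{(i)}\otimes\cdots\otimes u_{s,p}^{(i)}$ and the expert's output equals $k\sum_{s=1}^{r}u_{s,1}^{(i)}\,\langle\hat v_i,x\rangle\prod_{t=2}^{p}\langle u_{s,t}^{(i)},x\rangle$ --- a linear combination of $r$ fixed vectors, each scaled by a product of $p$ linear forms in $x$. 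A polarization identity writes any product of $p$ linear forms as a combination of at most $2^{p}$ terms $(w^{\top}x)^{p}=\sigma(w^{\top}x)$, so collecting monomials shows the expert has the required form $A_i\sigma(B_ix)$ with $d_{\mathrm{expert}}\le 2^{p}r=O_p(r)$.

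It remains to bound the error, which I would do exactly as in the linear proof. On $x$ in the support of $D$, the hard-gating weights $1/k$ cancel the explicit factor $k$, and by multilinearity of tensor contraction in the leg contracted against the $\hat v_{i_\ell}$'s,
\[
\fmoe(x)=\sum_{\ell=1}^{k}\langle\hat v_{i_\ell},x\rangle\,(A\hat v_{i_\ell})(x,\dots,x)=(A\tilde x)(x,\dots,x),\qquad \tilde x:=\textstyle\sum_{\ell=1}^{k}\hat v_{i_\ell}\hat v_{i_\ell}^{\top}x .
\]
Since $f^{*}(x)=(Ax)(x,\dots,x)$, since $x\in\mathrm{span}\{v_{i_1},\dots,v_{i_k}\}$, and since $D$ is supported in the unit ball, $(\gamma,k)$-approximate orthogonality gives $\|x-\tilde x\|\le\gamma\|x\|\le\gamma$, whence
\[
\|f^{*}(x)-\fmoe(x)\|=\|A(x,\dots,x,\,x-\tilde x)\|\le\|A\|_{\mathrm{op}}\,\|x\|^{p-1}\,\|x-\tilde x\|\le\gamma\|A\|_{\mathrm{op}} .
\]

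The crux I expect is recognizing that the width of each expert is controlled precisely by $\mathrm{rank}(A\hat v_i)$, and that the ``one copy of $x$ per expert'' device from the $p=1$ argument is exactly what collapses the global tensor $A$ into the per-expert tensor $A\hat v_i$; once that is in place, the polarization step (turning a product of $p$ linear forms into $O_p(1)$ monomial neurons) and the multilinearity-plus-approximate-orthogonality estimate are routine extensions of the linear case. One minor point to handle carefully is that $A$ need not be symmetric in its input legs --- but the identity $\fmoe(x)=(A\tilde x)(x,\dots,x)$ and the operator-norm bound hold verbatim without symmetry, so no symmetrization of $A$ is needed.
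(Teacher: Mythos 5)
Your construction is correct and essentially identical to the paper's: one expert per dictionary atom computing $k\langle\hat v_i,x\rangle\,(A\hat v_i)(x,\dots,x)$, hard gating to the active atoms, the error bound via multilinearity plus $(\gamma,k)$-approximate orthogonality, and the width bound via a CP decomposition of $A\hat v_i$. The only (immaterial) difference is that you turn each product of $p$ linear forms into $2^p$ power-activation neurons via the polarization identity, whereas the paper reaches the same $O_p(r)$ width by symmetrizing the tensor and using Vandermonde interpolation (Lemmas~\ref{lem:rank-to-symm-rank} and~\ref{lem:low-rank-to-mlp}).
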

\begin{proof}[Proof outline]
The proof proceeds by having the $m$ experts of the MoE correspond to the features in the dictionary. When a certain feature is active, the corresponding expert is loaded and captures the interaction of that feature with the rest of the input. The interaction rank condition ensures that this interaction can be represented with a width-$O_p(r)$ expert that has activation function $\sigma(t) = t^p$. The full proof is in Appendix~\ref{app:nonlinear-representation-proof}.
\end{proof}

\begin{remark}[Recovering Theorem~\ref{thm:approx-linear-with-sparse-dictionary-data}]
In the case of linear functions $f^*(x) = Ax$ where $A \in \R^{d \times d}$, the function has rank-1 interactions. This is because $Av_i$ is a vector for all $i \in [m]$, and so it is trivially a rank-1 tensor. Thus, the above theorem recovers our previous result in Theorem~\ref{thm:approx-linear-with-sparse-dictionary-data}, up to a constant factor blow-up in the expert width.
\end{remark}

\begin{remark}[Comparing active parameters of MLP versus MoE for generic homogeneous polynomials]
Generically, we should expect a tensor $A \in (\R^{d})^{\otimes (p+1)}$ to have rank $\Theta(d^{p+1})$ and the interactions to have rank $\Theta(d^p)$. Therefore, an MLP should require width $\Omega(d^{p+1})$ to express the polynomial, and an MoE should on the other hand require active number of neurons $O(kd^p)$. This implies a factor of $\Theta(d)$ decrease in the number of active parameters by using an MoE when we have constant dictionary sparsity $k = O(1)$.
\end{remark}

\subsection{Proof of Theorem~\ref{thm:approx-nonlinear-with-sparse-dictionary-data}}\label{app:nonlinear-representation-proof}

\subsubsection{Preliminary definitions and lemmas on tensor rank}

First, let us make a technical definition which is a variation on tensor rank, and let us prove a key lemma relating it to tensor rank.

\begin{definition}[Last-$(p-1)$-symmetric tensor rank]
Given $B \in (\R^{d})^{\otimes p}$, we say that has last-$(p-1)$-rank at most $r$ if there are vectors $w_i, u_{i} \in \R^d$ for $i \in [r]$ and $j \in [p]$ such that
\begin{align*}
B = \sum_{i=1}^r w_{i} \otimes u_{i}^{\otimes (p-1)}\,.
\end{align*}
\end{definition}
\begin{lemma}\label{lem:rank-to-symm-rank}
For any $p$, there is a constant $C_p > 0$ such that the following is true. Suppose that $B \in (\R^{d})^{\otimes p}$ is rank-$r$ and is symmetric in its last $(p-1)$ indices. Then its last-$(p-1)$-symmetric tensor rank is at most $C_p r$.
\end{lemma}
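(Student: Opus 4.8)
The plan is to obtain the claimed decomposition directly from the given rank-$r$ decomposition in two moves: first symmetrize over the last $p-1$ tensor slots — which costs nothing, since $B$ is already symmetric there — and then apply the polarization identity to rewrite each resulting symmetrized rank-one block as a short linear combination of symmetric powers of single vectors. Each block expands by a factor of $2^{p-2}$, so the lemma holds with $C_p = 2^{p-2}$ (for $p \geq 2$; the case $p=1$ is trivial).

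Concretely, write the given decomposition as $B = \sum_{i=1}^r u_{i1}\otimes u_{i2}\otimes\cdots\otimes u_{ip}$, and let $\mathrm{Sym}_{2:p}$ be the operator on $(\R^d)^{\otimes p}$ that averages a tensor over the $(p-1)!$ permutations of its last $p-1$ indices. Since $B$ is symmetric in those indices, $\mathrm{Sym}_{2:p}(B) = B$, and applying $\mathrm{Sym}_{2:p}$ to the right-hand side gives
\[
B \;=\; \sum_{i=1}^r u_{i1}\otimes T_i, \qquad T_i \;:=\; \mathrm{Sym}\big(u_{i2}\otimes\cdots\otimes u_{ip}\big)\in(\R^d)^{\otimes(p-1)},
\]
where each $T_i$ is a fully symmetric order-$q$ tensor, $q:=p-1$, that is the symmetrization of a single rank-one tensor.

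Next I would invoke the polarization identity: for any vectors $a_1,\dots,a_q\in\R^d$,
\[
\mathrm{Sym}(a_1\otimes\cdots\otimes a_q) \;=\; \frac{1}{q!\,2^{q-1}}\sum_{\epsilon\in E}\Big(\textstyle\prod_{j=1}^q\epsilon_j\Big)\Big(\textstyle\sum_{j=1}^q\epsilon_j a_j\Big)^{\otimes q},
\]
where $E\subseteq\{\pm1\}^q$ is a set of $2^{q-1}$ sign vectors containing one representative of each antipodal pair; this is the standard identity writing a product of linear forms as a combination of $q$-th powers, verified by expanding both sides and noting that only the $q!$ permutation terms survive the averaging over $\epsilon$. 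Applying it to each $T_i$ with $q=p-1$ yields $T_i = \sum_{s=1}^{2^{p-2}} c_{is}\,a_{is}^{\otimes(p-1)}$ for explicit scalars $c_{is}$ and vectors $a_{is}\in\R^d$. Substituting back,
\[
B \;=\; \sum_{i=1}^r\sum_{s=1}^{2^{p-2}}(c_{is}\,u_{i1})\otimes a_{is}^{\otimes(p-1)},
\]
which is exactly a decomposition of the required form with $r\cdot 2^{p-2}$ terms, so $C_p = 2^{p-2}$ works.

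I do not anticipate a genuine obstacle: the only things needing care are the bookkeeping of the polarization constants and the degenerate small cases. For $p=2$ the statement reduces to saying that a matrix (trivially symmetric in its single last index) has last-$1$-symmetric rank equal to its ordinary rank, matching $2^{p-2}=1$; for $p=3$ one recovers the familiar factor of $2$ from $b\odot c = \tfrac14\big((b+c)^{\otimes2}-(b-c)^{\otimes2}\big)$. The one substantive fact used is that over $\R$ the symmetrization of a rank-one order-$q$ tensor has symmetric rank at most $2^{q-1}$, which the polarization identity supplies constructively, so no field-dependent subtlety (e.g. Comon-type phenomena) enters.
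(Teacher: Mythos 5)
Your proof is correct and follows essentially the same route as the paper's: both reduce to showing that the last-$(p-1)$-symmetrization of a single rank-one term $u_{i1}\otimes\cdots\otimes u_{ip}$ has bounded last-$(p-1)$-symmetric rank, by extracting the multilinear $t_1\cdots t_{p-1}$ coefficient of $(\sum_j t_j u_{ij+1})^{\otimes(p-1)}$ as a short linear combination of $(p-1)$-st tensor powers. The only difference is in how that coefficient is extracted — the paper uses iterated univariate (Vandermonde) interpolation, while your polarization identity over $\{\pm1\}^{p-1}$ does the same job more economically and yields the explicit constant $C_p = 2^{p-2}$ rather than the roughly $p^{p-1}$ implicit in the paper's construction.
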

\begin{proof}
For any $v,u_1,\ldots,u_{p-1} \in \R^d$, define $B = v \otimes \E_{\sigma \in S_{p-1}} [u_{\sigma(1)}\otimes \cdots \otimes u_{\sigma(p-1)}]$.
It suffices to show that the last-$(p-1)$-symmetric tensor rank of $B$ is bounded by a constant $C_p$ depending only on $p$.

For any $t_1,\ldots,t_{p-1} \in \R$, define
\begin{align*}
B^{(t_1,\ldots,t_{p-1})} = v \otimes (\sum_{i=1}^{p-1} t_i u_{i})^{\otimes (p-1)}\,.
\end{align*}
Notice that this is a total-degree-$(p-1)$ polynomial in $t_1,\ldots,t_{p-1}$, over the ring of tensors $(\R^{d})^{\otimes p}$. By polynomial interpolation, there is a linear combination of $\leq C_p$ of these tensors that yields the $t_1t_2 \cdots t_{p-1}$ term of $B^{(t_1,\ldots,t_{p-1})}$, which is the tensor $B$.

More explicitly, given a polynomial $h(t_1,\ldots,t_{p-1})$ of total degree at most $(p-1)$ over some ring $R$, we can extract the $t_1\cdots t_{p-1}$ term by viewing the polynomial as a polynomial in $t_{p-1}$ over the ring $R[t_1,\ldots,t_{p-2}]$, extracting the $t_{p-1}$ term which is an element of $R[t_1,\ldots,t_{p-2}]$, and iterating. This $t_{p-1}$ term can be computed as
\begin{align*}
\sum_{i=1}^p c_i h(t_1,\ldots,t_{p-2},i)\,,
\end{align*}
where $c_i$ are coefficients such that for any $0 \leq d \leq p-1$ we have $\sum_{i=1}^p c_i i^j = \delta_{j,1}$. Such coefficients are guaranteed by the invertibility of the Vandermonde matrix.
\end{proof}

\subsubsection{Equivalence between low tensor rank and existence of small MLP}

\begin{lemma}[Small tensor rank means there is a small network]\label{lem:low-rank-to-mlp}
For any $p$, there is a constant $C_p > 0$ such that the following is true. Suppose that $B \in (\R^d)^{\otimes (p+1)}$ has tensor rank $r$. Then there is an MLP with activation $\sigma(t) = t^p$ and width $\leq C_p r$ computing $f : \R^d \to \R^d$ given by
\begin{align*}
[f(x)]_{j_1} = \sum_{j_2,\ldots,j_{p+1}} B_{j_1,\ldots,j_{p+1}} x_{j_2} \dots x_{j_{p+1}}\,.
\end{align*}
\end{lemma}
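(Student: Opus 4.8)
The plan is to combine a rank decomposition of the tensor with a polarization identity that rewrites a product of $p$ linear forms as a short linear combination of $p$-th powers of linear forms; the desired MLP is then read off directly from the coefficients. First, by the definition of tensor rank, write $B = \sum_{i=1}^r u_{i,1}\otimes\cdots\otimes u_{i,p+1}$ with each $u_{i,\ell}\in\R^d$. Contracting the last $p$ indices of $B$ against $x$ gives
\[
f(x) \;=\; \sum_{i=1}^r \Big(\prod_{\ell=2}^{p+1}\langle u_{i,\ell},x\rangle\Big)\, u_{i,1}\,,
\]
i.e.\ a sum of $r$ terms, each a fixed output vector $u_{i,1}$ scaled by a product of $p$ linear functionals of $x$. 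Note that $B$ need not be symmetric: since $x_{j_2}\cdots x_{j_{p+1}}$ is already symmetric in the contracted indices, the raw rank decomposition suffices and no symmetrization is needed.

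The key step is the polarization identity: for any linear forms $L_1,\dots,L_p$ on $\R^d$,
\[
\prod_{\ell=1}^p L_\ell(x) \;=\; \frac{1}{2^p\, p!}\sum_{\epsilon\in\{\pm1\}^p}\Big(\prod_{\ell=1}^p\epsilon_\ell\Big)\Big(\sum_{\ell=1}^p\epsilon_\ell\, L_\ell(x)\Big)^p\,.
\]
A short multinomial-expansion check shows that every monomial in $L_1(x),\dots,L_p(x)$ other than the fully multilinear one is annihilated by the sign average $\sum_\epsilon\prod_\ell\epsilon_\ell$, while the multilinear term $L_1(x)\cdots L_p(x)$ survives with exactly the stated normalization — this is the same sign-cancellation / polynomial-interpolation mechanism already used in the proof of Lemma~\ref{lem:rank-to-symm-rank}, so I would either cite that technique or include the two-line verification. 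Applying this with $L_\ell(x)=\langle u_{i,\ell+1},x\rangle$ converts the $i$-th term of $f$ into $\tfrac{1}{2^p p!}\sum_{\epsilon}(\prod_\ell\epsilon_\ell)\,\langle w_{i,\epsilon},x\rangle^p\,u_{i,1}$, where $w_{i,\epsilon} := \sum_{\ell=1}^p\epsilon_\ell\, u_{i,\ell+1}\in\R^d$.

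Finally, set $\sigma(t)=t^p$ and assemble the network with a hidden layer of width $N=2^p r$ indexed by pairs $(i,\epsilon)$ with $i\in[r]$, $\epsilon\in\{\pm1\}^p$: let $B_{\mathrm{mlp}}\in\R^{N\times d}$ have rows $w_{i,\epsilon}^{\top}$ and let $A_{\mathrm{mlp}}\in\R^{d\times N}$ have columns $a_{i,\epsilon}:=\tfrac{\prod_\ell\epsilon_\ell}{2^p p!}\,u_{i,1}$. Then $(B_{\mathrm{mlp}}x)_{(i,\epsilon)}=\langle w_{i,\epsilon},x\rangle$, so $\sigma(B_{\mathrm{mlp}}x)_{(i,\epsilon)}=\langle w_{i,\epsilon},x\rangle^p$, and $A_{\mathrm{mlp}}\sigma(B_{\mathrm{mlp}}x)=\sum_{i,\epsilon}a_{i,\epsilon}\langle w_{i,\epsilon},x\rangle^p=f(x)$ identically on $\R^d$, which is an MLP of width $C_p r$ with $C_p=2^p$, as required. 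The only genuine content is the polarization identity together with the sign-cancellation check; everything else is bookkeeping, and I anticipate no real obstacle beyond double-checking the normalizing constant $2^p p!$ and verifying that the free output index $j_1$ — carried unchanged by the vectors $u_{i,1}$ — makes the construction output into $\R^d$ rather than $\R$.
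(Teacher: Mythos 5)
Your proposal is correct, and the polarization identity you invoke checks out: expanding $\bigl(\sum_{\ell}\epsilon_\ell L_\ell(x)\bigr)^p$ multinomially and averaging against $\prod_\ell \epsilon_\ell$ kills every monomial whose exponent vector has an even entry, leaving only the multilinear term with coefficient $2^p p!$, so the network you assemble computes $f$ exactly with width $2^p r$. The paper reaches the same endpoint by a slightly longer route: it first symmetrizes $B$ in its last $p$ slots (paying a factor of $p!$ in the rank bound), then invokes Lemma~\ref{lem:rank-to-symm-rank}, which converts ordinary rank into ``last-indices-symmetric'' rank via Vandermonde-based polynomial interpolation on the tensors $v\otimes(\sum_i t_i u_i)^{\otimes(p-1)}$, and only then reads off the MLP $\sum_i w_i (u_i\cdot x)^p$. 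Your version works directly at the level of the scalar polynomial $\prod_\ell\langle u_{i,\ell},x\rangle$, correctly observing that no symmetrization is needed because the monomial $x_{j_2}\cdots x_{j_{p+1}}$ is already symmetric in the contracted indices; this bypasses the auxiliary notion of last-$(p-1)$-symmetric rank and Lemma~\ref{lem:rank-to-symm-rank} entirely, and it produces the explicit constant $C_p=2^p$ rather than an unspecified one. The underlying mechanism (extracting the multilinear part of powers of linear combinations) is the same in both proofs --- you evaluate at the $2^p$ sign patterns $\{\pm1\}^p$, the paper interpolates at integer points --- but your instantiation is more self-contained and gives a cleaner width bound, at the cost of not isolating a reusable tensor-rank statement.
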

\begin{proof}

First, let us define the version of $B$ that is symmetrized in the last $(p-1)$ entries by $\tilde{B} \in (\R^d)^{\otimes (p+1)}$ with
\begin{align*}
\tilde{B}_{j_1,\ldots,j_{p+1}} = \frac{1}{p!} \sum_{\tau \in S_p} B_{j_1,j_{\sigma(1)+1},\ldots,j_{\sigma(p)+1}}\,,
\end{align*}
where the sum is over permutations over $[p]$. Notice that 
\begin{align*}
\rank(\tilde{B}) \leq p! \rank(B) \leq p! r\,,
\end{align*}
and that we may equivalently write the target function as 
\begin{align*}
[f(x)]_{j_1} = \sum_{j_2,\ldots,j_{p+1}} \tilde{B}_{j_1,\ldots,j_{p+1}} x_{j_2} \dots x_{j_{p+1}}\,.
\end{align*}

By Lemma~\ref{lem:rank-to-symm-rank}, there is a constant $C_p > 0$ such that $\tilde{B}$ has last-$(p-1)$-symmetric rank $r' \leq C_pr$. Let $w_i,u_i \in \R^d$ for $i \in [r']$ be such that
\begin{align*}
\tilde{B} = \sum_{i=1}^{r'} w_i \otimes u_i^{\otimes (p-1)}\,.
\end{align*}
It follows that 
\begin{align*}
[f(x)]_{j_1} &= \sum_{i=1}^{r'}\sum_{j_2,\ldots,j_{p+1}} [w_i]_{j_1} [u_i]_{j_2} \dots [u_i]_{j_{p+1}} x_{j_2} \dots x_{j_{p+1}} \\
&= \sum_{i=1}^{r'}[w_i]_{j_1} (u_i \cdot x)^{p}\,,
\end{align*}
which is an MLP of width $\leq C_p r$, with activation function $\sigma(t) = t^p$.
\end{proof}

\subsubsection{Proof of Theorem~\ref{thm:approx-nonlinear-with-sparse-dictionary-data}}

We are now ready to prove Theorem~\ref{thm:approx-nonlinear-with-sparse-dictionary-data}.

\begin{proof}[Proof of Theorem~\ref{thm:approx-nonlinear-with-sparse-dictionary-data}]

Let the MoE have $m$ experts, each corresponding to one of the elements of the dictionary. Let the $i$th expert compute $f_i : \R^d \to \R^d$, given by
$$[f_i(x)]_{j_1} = \sum_{j_2,\ldots,j_{p+1} \in [d]} A_{j_1,\ldots,j_{p+1}} x_{j_2} x_{j_3} \cdots x_{j_{p}} [\hat{v}_i]_{j_{p+1}} (\hat{v}_i \cdot x)\,.$$

Next, let the hard gating function be such that, on input $x$, the gating function activates $k$ distinct experts $i_1,\ldots,i_k$, such that $x \in \mathrm{span}\{v_{i_1},\ldots,v_{i_k}\}$. By definition, $\fmoe(x) = \sum_{l=1}^k f_{i_l}(x)$. This implies
\begin{align*}
[\fmoe(x) - f^*(x)]_{j_1} &= \sum_{j_2,\ldots,j_{p+1} \in [d]} A_{j_1,\ldots,j_{p+1}} x_{j_2}x_{j_3} \cdots x_{j_{p}} ((\sum_{l=1}^k [\hat{v}_{i_l}]_{j_{p+1}} (\hat{v}_{i_l} \cdot x)) - x_{j_{p+1}})\,.
\end{align*}
If we define the error vector $\xi(x) = \sum_{l=1}^k \hat{v}_{i_l} \hat{v}_{i_l}^{\top} x - x$, which we recall is guaranteed by $(\gamma,k)$-approximate-orthogonality to be of norm $\|\xi(x)\| \leq \gamma \|x\| \leq \gamma$, then
\begin{align*}
[\fmoe(x) - f^*(x)]_{j_1} = \sum_{j_2,\ldots,j_{p+1} \in [d]} A_{j_1,\ldots,j_{p+1}} x_{j_2}x_{j_3} \cdots x_{j_{p}} [\xi(x)]_{j_{p+1}}\,.
\end{align*}
So, by the definition of the tensor operator norm,
\begin{align*}
\|\fmoe - f^*(x)\| &\leq \|A\|_{\mathrm{op}} \|x\|^{p-1} \|\xi(x)\| \leq \gamma \|A\|_{\mathrm{op}}\,.
\end{align*}

It remains to show that each expert $f_i$ can be implemented by an MLP of width at most $C_p r$. To see this, note that if we define $B^{(i)} = A \hat{v}_i \in (\R^{d})^{\otimes p}$ (with tensor-vector multiplication as in Definition~\ref{def:tensor-vector-multiplication}), then
\begin{align*}
[f_i(x)]_{j_1} = \sum_{j_2,\ldots,j_{p+1}} B^{(i)}_{j_1,\ldots,j_{p}} [\hat{v}_{i}]_{j_{p+1}} x_{j_2} \dots x_{j_{p+1}}\,.
\end{align*}
In other words, if we define $C^{(i)} \in (\R^{d})^{\otimes (p+1)}$ by $C^{(i)}_{j_1,\ldots,j_{p+1}} = B^{(i)}_{j_1,\ldots,j_p} [\hat{v}_i]_{j_{p+1}}$, then
\begin{align*}
[f_i(x)]_{j_1} = \sum_{j_2,\ldots,j_{p+1}} C^{(i)}_{j_1,\ldots,j_{p+1}} x_{j_2} \dots x_{j_{p+1}}\,.
\end{align*}

Notice that 
\begin{align*}
\rank(C^{(i)}) \leq \rank(B^{(i)}) \leq r\,,
\end{align*}
by the assumption of rank-$r$ interactions between features. Applying Lemma~\ref{lem:low-rank-to-mlp} to $B^{(i)}$ implies there is an MLP of width $\leq C_p r$ computing $f_i$.

\end{proof}

\bibliography{bibliography}
\bibliographystyle{alpha}

\end{document}